\algrenewcommand\algorithmicindent{0.5em}
\newcommand{\E}{\mathbb{E}}
\newcommand{\mmu}{\boldsymbol{\mu}}
\newcommand{\xx}{\mathbf{x}}
\newcommand{\yy}{\mathbf{y}}
\theoremstyle{plain}
\newtheorem{theorem}{Theorem}[section]
\theoremstyle{definition}
\newtheorem{definition}[theorem]{Definition}
\theoremstyle{remark}
\icmltitlerunning{Neural Networks Learn Statistics of Increasing Complexity}
\begin{document}

\twocolumn[
\icmltitle{Neural Networks Learn Statistics of Increasing Complexity}



\icmlsetsymbol{equal}{*}

\begin{icmlauthorlist}
\icmlauthor{Nora Belrose}{eai}
\icmlauthor{Quintin Pope}{osu}
\icmlauthor{Lucia Quirke}{eai}
\icmlauthor{Alex Mallen}{eai}
\icmlauthor{Brennan Dury}{eai}
\icmlauthor{Xiaoli Fern}{osu}
\end{icmlauthorlist}

\icmlaffiliation{eai}{EleutherAI}
\icmlaffiliation{osu}{Oregon State University}

\icmlcorrespondingauthor{Nora Belrose}{nora@eleuther.ai}

\icmlkeywords{Machine Learning, ICML}

\vskip 0.3in
]



\printAffiliationsAndNotice{} 

\begin{abstract}
The \emph{distributional simplicity bias} (DSB) posits that neural networks learn low-order moments of the data distribution first, before moving on to higher-order correlations. In this work, we present compelling new evidence for the DSB by showing that networks automatically learn to perform well on maximum-entropy distributions whose low-order statistics match those of the training set early in training, then lose this ability later. We also extend the DSB to discrete domains by proving an equivalence between token $n$-gram frequencies and the moments of embedding vectors, and by finding empirical evidence for the bias in LLMs. Finally we use optimal transport methods to surgically edit the low-order statistics of one class of images to match those of another, and show early-training networks treat the edited images as if they were drawn from the target class. Code is available at \url{https://github.com/EleutherAI/features-across-time}.
\end{abstract}

\section{Introduction}
\label{introduction}

Neural networks exhibit a remarkable ability to fit complex datasets while generalizing to unseen data points and distributions. This is especially surprising given that deep networks can perfectly fit random labels \citep{zhang2021understanding}, and it is possible to intentionally ``poison'' networks so that they achieve zero training loss while behaving randomly on a held out test set \citep{huang2020understanding}.

A recently proposed explanation for this phenomenon is the \textbf{distributional simplicity bias (DSB)}: neural networks learn to exploit the lower-order statistics of the input data first-- e.g. mean and (co)variance-- before learning to use its higher-order statistics, such as  (co)skewness or (co)kurtosis. \citet{refinetti2023neural} provide evidence for the DSB by training networks on a sequence of synthetic datasets that act as increasingly precise approximations to the real data, showing that early checkpoints perform about as well on real data as checkpoints trained directly on the real data.

We build on \citet{refinetti2023neural} by inverting their experimental setup. We train our models on real datasets, then test them throughout training on synthetic data that probe the model's reliance on statistics of different orders. We believe this experimental design provides more direct evidence about the generalization behavior of commonly used models and training practices.

Our primary theoretical contributions are to \textbf{(1)} motivate the DSB through a Taylor expansion of the expected loss, \textbf{(2)} propose criteria quantifying whether a model ``uses" statistics up to order $k$ by checking that the model is sensitive to interventions on the first $k$ statistics, while being robust to interventions on higher-order statistics, \textbf{(3)} describe efficient methods of producing synthetic data that let us investigate whether models satisfy the above criteria, and \textbf{(4)} extend the DSB to discrete domains by proving an equivalence between token $n$-gram frequencies and the moments of sequences of embedding vectors.

We use a Taylor series expansion to express a model's expected loss as a sum over the central moments of an evaluation dataset. This connection provides some motivation for the DSB. Specifically, if during training, a network's loss is well approximated by the first $k$ terms of its Taylor expansion, then the model should only be sensitive to statistics up to order $k$, and we argue that earlier terms of the expansion will generally become relevant before later terms.

We describe two intuitive criteria that a model sensitive to statistics up to order $k$ should satisfy: \textbf{(1)} changing the first $k$ statistics of data from class A to match class B should cause the model to classify the modified data as class B, and \textbf{(2)} models should be unaffected by ``deleting" higher-order data statistics. We evaluate whether image classification networks satisfy the above criteria during training through extensive empirical experiments across a variety of network architectures and image datasets.

We evaluate whether the network satisfies criterion \textbf{(1)} by generating synthetic datasets where we ``graft" the means and covariances of one class onto images of another class, and evaluating whether the network's classifies the resulting data as belonging to the target class. We formalize this notion of ``grafting'' statistics with optimal transport theory, using an analytic formula to map samples from one class-conditional distribution to another, while minimizing the expected squared Euclidean distance the samples are moved. We also describe coordinatewise quantile normalization, an approximate first order method of grafting the means of one class onto the images of another.

We evaluate the degree to which networks satisfy criterion \textbf{(2)} by generating synthetic data that match the class-conditional means and covariances, but are otherwise maximum entropy.\footnote{Appealing to the principle of maximum entropy to operationalize the notion of ``deletion" in criterion \textbf{(2)}.} We generate two datasets for this purpose. One dataset comes from sampling from a Gaussian distribution with matching mean and covariances. The other dataset comes from incorporating the constraints on image pixel values. We propose a novel gradient-based optimization method to produce samples from a hypercube-constrained maximum entropy distribution. We additionally describe independent coordinate sampling, a first order method of generating hypercube-constrained maximum entropy samples using only means.


\begin{figure}
    \centering
    \begin{subfigure}[t]{0.3\linewidth}
        \includegraphics[width=\linewidth]{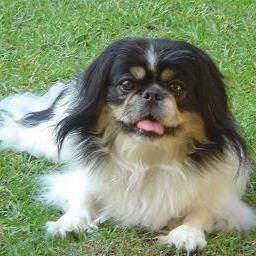}
        \label{fig:original-pekinese}
    \end{subfigure}
    \hfill
    \begin{subfigure}[t]{0.3\linewidth}
        \includegraphics[width=\linewidth]{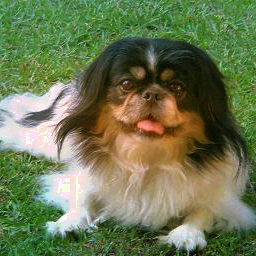}
    \end{subfigure}
    \hfill
    \begin{subfigure}[t]{0.3\linewidth}
        \includegraphics[width=\linewidth]{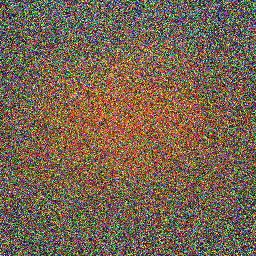}
        \label{fig:fake-goldfish}
    \end{subfigure}
    \caption{\textbf{(left)} Pekinese dog image from the ImageNet training set. \textbf{(center)} Image after quantile normalizing its pixels to match the marginal distribution of the goldfish class on ImageNet. The grass is now a slightly darker shade of green and the dog's fur has a reddish hue. \textbf{(right)} Synthetic ``goldfish'' generated by sampling each pixel independently from its marginal distribution.
    \label{fig:first_order_examples}
    }
\end{figure}

Across models and datasets, we find a common pattern where criteria \textbf{(1)} and \textbf{(2)} hold early in training, with networks largely classifying images according to the means and covariances of the distributions from which they're drawn. But as training progresses, networks become sensitive to higher-order statistics, resulting in a U-shaped loss curve.

We also evaluate EleutherAI's Pythia autoregressive language models \cite{biderman2023pythia} on synthetic data sampled from unigram and bigram models trained on the Pile \cite{gao2020pile}. We find a fascinating ``double descent'' \cite{vallet1989linear, doi:10.1073/pnas.1903070116} phenomenon where models initially mirror the same U-shaped scaling observed in image classifiers, then use in-context learning to achieve even lower loss later in training.

For a thorough review of related work on simplicity biases in machine learning, see Appendix~\ref{app:related-work}.

\section{Theory and Methods}
\label{sec:theory}

\begin{figure}
    \centering

    \hspace{1mm}
    \begin{subfigure}[t]{0.085\linewidth}
        \begin{overpic}[width=\linewidth]{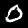}
            \put(-30,40){\rotatebox[origin=c]{90}{\textbf{\tiny MNIST}}}
        \end{overpic}
    \end{subfigure}
    \hfill
    \begin{subfigure}[t]{0.085\linewidth}
        \includegraphics[width=\linewidth]{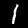}
    \end{subfigure}
    \hfill
    \begin{subfigure}[t]{0.085\linewidth}
        \includegraphics[width=\linewidth]{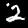}
    \end{subfigure}
    \hfill
    \begin{subfigure}[t]{0.085\linewidth}
        \includegraphics[width=\linewidth]{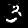}
    \end{subfigure}
    \hfill
    \begin{subfigure}[t]{0.085\linewidth}
        \includegraphics[width=\linewidth]{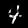}
    \end{subfigure}
    \hfill
    \begin{subfigure}[t]{0.085\linewidth}
        \includegraphics[width=\linewidth]{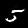}
    \end{subfigure}
    \hfill
    \begin{subfigure}[t]{0.085\linewidth}
        \includegraphics[width=\linewidth]{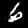}
    \end{subfigure}
    \hfill
    \begin{subfigure}[t]{0.085\linewidth}
        \includegraphics[width=\linewidth]{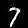}
    \end{subfigure}
    \hfill
    \begin{subfigure}[t]{0.085\linewidth}
        \includegraphics[width=\linewidth]{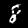}
    \end{subfigure}
    \hfill
    \begin{subfigure}[t]{0.085\linewidth}
        \includegraphics[width=\linewidth]{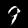}
    \end{subfigure}\\
    \vspace{1mm}
    \vfill

    \hspace{1mm}
    \begin{subfigure}[t]{0.085\linewidth}
        \begin{overpic}[width=\linewidth]{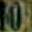}
            \put(-30,40){\rotatebox[origin=c]{90}{\textbf{\tiny SVHN}}}
        \end{overpic}
    \end{subfigure}
    \hfill
    \begin{subfigure}[t]{0.085\linewidth}
        \includegraphics[width=\linewidth]{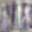}
    \end{subfigure}
    \hfill
    \begin{subfigure}[t]{0.085\linewidth}
        \includegraphics[width=\linewidth]{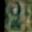}
    \end{subfigure}
    \hfill
    \begin{subfigure}[t]{0.085\linewidth}
        \includegraphics[width=\linewidth]{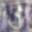}
    \end{subfigure}
    \hfill
    \begin{subfigure}[t]{0.085\linewidth}
        \includegraphics[width=\linewidth]{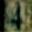}
    \end{subfigure}
    \hfill
    \begin{subfigure}[t]{0.085\linewidth}
        \includegraphics[width=\linewidth]{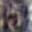}
    \end{subfigure}
    \hfill
    \begin{subfigure}[t]{0.085\linewidth}
        \includegraphics[width=\linewidth]{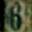}
    \end{subfigure}
    \hfill
    \begin{subfigure}[t]{0.085\linewidth}
        \includegraphics[width=\linewidth]{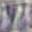}
    \end{subfigure}
    \hfill
    \begin{subfigure}[t]{0.085\linewidth}
        \includegraphics[width=\linewidth]{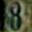}
    \end{subfigure}
    \hfill
    \begin{subfigure}[t]{0.085\linewidth}
        \includegraphics[width=\linewidth]{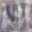}
    \end{subfigure}\\
    \vspace{1mm}
    \vfill

    \hspace{1mm}
    \begin{subfigure}[t]{0.085\linewidth}
        \begin{overpic}[width=\linewidth]{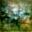}
            \put(-30,40){\rotatebox[origin=c]{90}{\textbf{\tiny CIFAR}}}
        \end{overpic}
    \end{subfigure}
    \hfill
    \begin{subfigure}[t]{0.085\linewidth}
        \includegraphics[width=\linewidth]{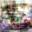}
    \end{subfigure}
    \hfill
    \begin{subfigure}[t]{0.085\linewidth}
        \includegraphics[width=\linewidth]{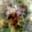}
    \end{subfigure}
    \hfill
    \begin{subfigure}[t]{0.085\linewidth}
        \includegraphics[width=\linewidth]{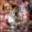}
    \end{subfigure}
    \hfill
    \begin{subfigure}[t]{0.085\linewidth}
        \includegraphics[width=\linewidth]{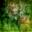}
    \end{subfigure}
    \hfill
    \begin{subfigure}[t]{0.085\linewidth}
        \includegraphics[width=\linewidth]{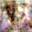}
    \end{subfigure}
    \hfill
    \begin{subfigure}[t]{0.085\linewidth}
        \includegraphics[width=\linewidth]{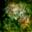}
    \end{subfigure}
    \hfill
    \begin{subfigure}[t]{0.085\linewidth}
        \includegraphics[width=\linewidth]{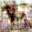}
    \end{subfigure}
    \hfill
    \begin{subfigure}[t]{0.085\linewidth}
        \includegraphics[width=\linewidth]{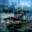}
    \end{subfigure}
    \hfill
    \begin{subfigure}[t]{0.085\linewidth}
        \includegraphics[width=\linewidth]{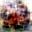}
    \end{subfigure}

    \caption{Non-cherrypicked ``fake'' images produced by maximum entropy sampling using only the first two moments of the class-conditional distributions, and a hypercube constraint. Fake MNIST digits are clearly recognizable, SVHN digits less so, whereas fake CIFAR-10 images look nothing like their respective classes: airplane, automobile, bird, cat, deer, dog, frog, horse, ship, truck.}
    \label{fig:fake-images}
\end{figure}

Let $\mathcal L(\xx)$ denote the loss of a neural network evaluated on input $\xx$. If $\mathcal L_{\theta}$ is analytic\footnote{Famously, the ReLU activation function is not analytic, but it is possible to construct arbitrarily close approximations to ReLU that are analytic \citep[Sec. 4]{hendrycks2016gaussian}.} with an adequate radius of convergence, we can Taylor expand the loss for any given $\xx$ around the mean input $\mmu$ as:
\begin{equation}\label{eq:multi-index}
    \mathcal L(\xx) = \sum_{\alpha \in \mathbb N^d} \frac{(\xx -\mmu)^{\alpha}}{\alpha!} (\partial^{\alpha} \mathcal L)(\mmu),
\end{equation}
where $\alpha$ is a \href{https://en.wikipedia.org/wiki/Multi-index_notation}{multi-index}, or a $d$-tuple assigning an integer to each coordinate of $\xx$. Recall that taking a vector to the power of a multi-index denotes a product of the components of the vector, where each component of the index indicates the multiplicity: e.g. if $\alpha = (1, 4, 6)$, the expression $(\xx -\mmu)^{\alpha}$ denotes the product $(x_1 - \mu_1)(x_2 - \mu_2)^4(x_3 - \mu_3)^6$. Similarly, ($\partial^\alpha \mathcal L$) is shorthand for the mixed partial derivative $\frac{\partial^{11}}{\partial x_1 \partial x_2^4 \partial x_3^6} \mathcal L$. The factorial $\alpha!$ denotes the product of the factorials of the components: $1! \times 4! \times 6! = 17280$.

If $\xx$ is drawn from a distribution with compact support,\footnote{The requirement is slightly weaker than this: we require that the distribution has finite moments of all orders, which is true when the support is compact.} which is true for images and text, we can take the expectation of both sides of Eq.~\ref{eq:multi-index}. This leads to an expression summing over all the \emph{central moments} of $\xx$ multiplied by the corresponding partial derivatives of $\mathcal L$ evaluated at $\mmu$:
\begin{equation}\label{eq:moments}
    \E[\mathcal L(\xx)] = \sum_{\alpha \in \mathbb N^d} \frac{(\partial^{\alpha} \mathcal L)(\mmu)}{\alpha!} \underbrace{\E[(\xx -\mmu)^{\alpha}]}_{\mathrm{central\:moment}}
\end{equation}
Equation~\ref{eq:moments} suggests a close connection between the moments of the data distribution and the expected loss of a neural network evaluated on that distribution.\footnote{Expanding around an an arbitrary point $\mathbf{a}$ would yield an expression containing moments about $\mathbf{a}$, and our analysis would otherwise be unchanged.}

\begin{figure*}
    \centering
    \includegraphics[trim=0 0 0 50, clip, width=0.95\textwidth]{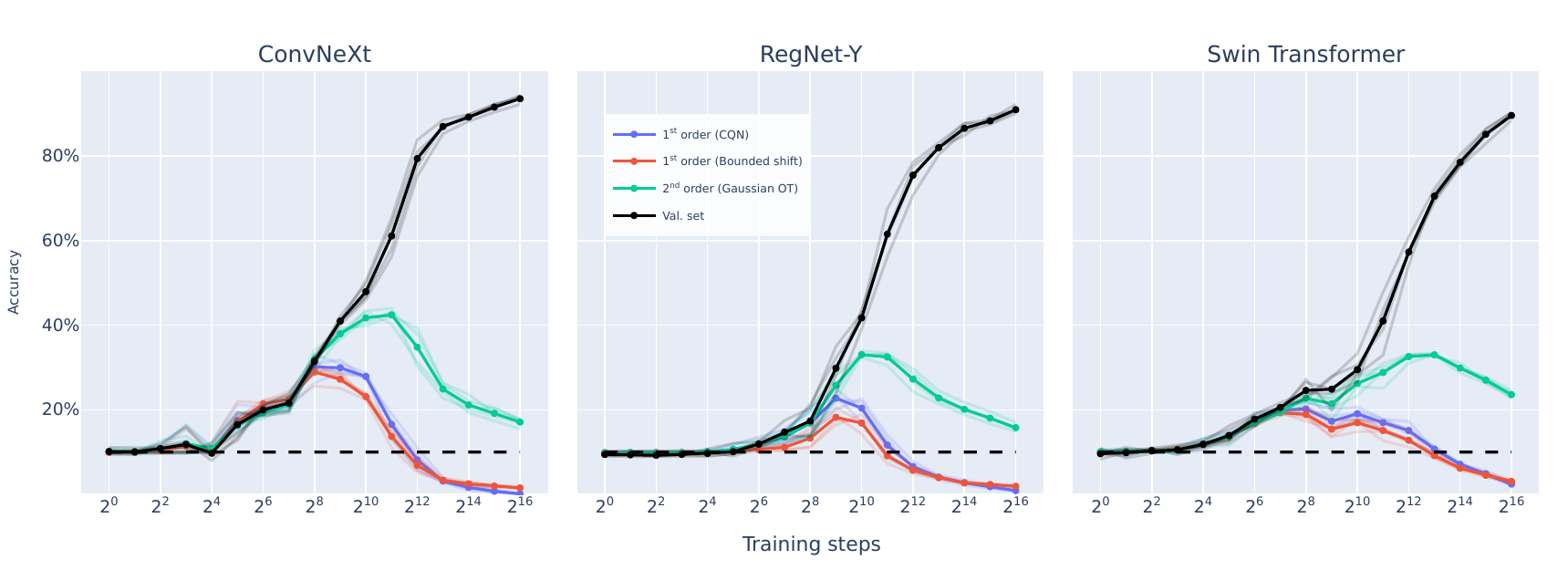}
    \caption{Accuracy of computer vision models when evaluated on images edited with optimal transport maps as described in Sec.~\ref{sec:ot}, using the \emph{target class}, not the source class, as the label. Between roughly $2^4$ and $2^{12}$ training steps, all models classify the CQN-edited images coming from target class, with a peak in accuracy at $2^{9}$.}
    \label{fig:cifar10-ot}
\end{figure*}

\subsection{Intuition}

Without loss of generality, assume the input is constrained to the unit hypercube.\footnote{This is true of images with standard PyTorch preprocessing and one-hot encoded token sequences; other inputs can be rescaled to match this criterion, given our assumption of compact support.} Since every coordinate of $\mathbf{x}$ is no greater than 1, the moments will have magnitudes that monotonically decrease with increasing order; for example, $\E[x_i x_j] \leq \E[x_i] $ for any $i, j \in 1 \ldots d$.

Indeed, we would expect the moment magnitude to decay \emph{exponentially} with order when the coordinates are independent, roughly counterbalancing the exponential increase in the number of distinct moments at higher orders. Assuming the higher derivatives of $\mathcal L$ are reasonably well-behaved at initialization, the $\frac{1}{n!}$ Taylor coefficients will then cause the contribution of higher-order moments to Eq.~\ref{eq:moments} to decay monotonically and factorially fast with order.

As training progresses however, the derivatives of $\mathcal L$ become correlated with the corresponding moments, potentially inflating the magnitude of higher-order terms in Eq.~\ref{eq:moments}. It then seems natural to suppose that the magnitude of higher-order terms will grow in roughly monotonic order-- that is, the second order term will become important first, followed by the third order term, and so on\footnote{Another argument for monotonicity is that earlier terms account for factorially more of the loss at initialization, and are thus plausibly higher-priority targets for gradient descent, until the optimizer is no longer able to easily reduce the loss further by better matching the associated statistics and moves on to higher-order terms.}. 

\subsection{Criteria}

Intuitively, if a model only ``uses'' low-order statistics of the input distribution, this means its behavior should be strongly affected by interventions on the lower-order statistics of the input, but largely unaffected by interventions on the higher-order statistics. More specifically:
\begin{enumerate}
    \item\label{item:grafting} ``Grafting'' the low-order statistics of class $B$ onto class $A$ should cause the model to treat examples from $A$ as if they were from $B$.
    \item\label{item:deletion} ``Deleting'' the information contributed by higher-order statistics should not harm the model's performance.
\end{enumerate}

We operationalize both criteria more precisely below and explain how we produce synthetic data that lets us evaluate the degree to which a given model satisfies each criterion.

\subsection{Optimal Transport}
\label{sec:ot}

\begin{figure*}
    \centering
    \includegraphics[trim=0 140 0 0, clip, width=\textwidth]{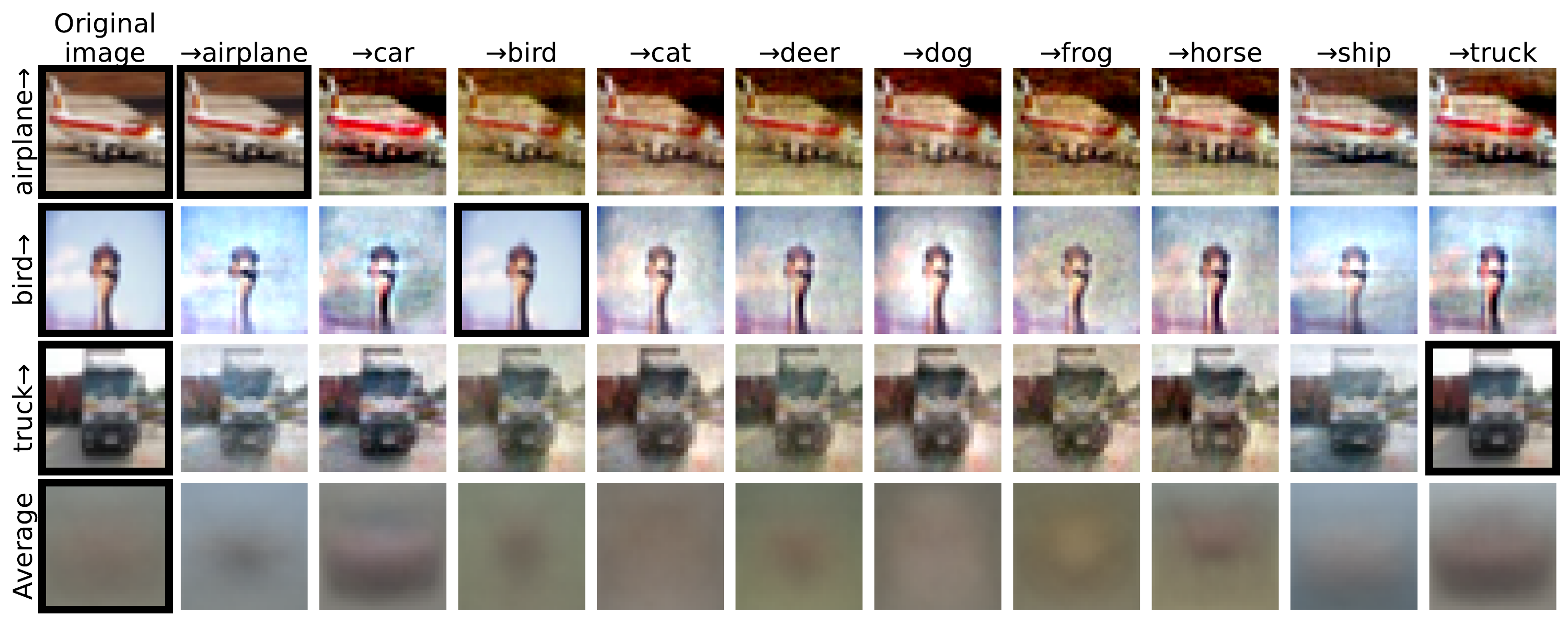}
    \caption{Rows 1-3 show how Gaussian optimal transport affects the example CIFAR-10 airplane, bird and truck images. Each row starts with the original unedited image on the left, with each subsequent column showing the effects of editing that image's first two moments to match the class-conditional distributions of a particular target class \textbf{(top)}.}
    \label{fig:gaussian-ot}
\end{figure*}

We operationalize Criterion~\ref{item:grafting} using optimal transport (OT) theory, which provides tools for transforming samples from one probability distribution into samples from another while minimizing the average distance that samples are moved. We use three OT methods in our experiments: coordinatewise quantile normalization and bounded shift, which primarily affect the first order moments of the distribution, and Gaussian OT, which affects both the first and second-order moments.

\paragraph{Gaussian Optimal Transport}
\label{para:gaussian-ot}

Given two Gaussians $P = \mathcal N(\boldsymbol{\mu}_P, \boldsymbol{\Sigma}_P)$ and $Q = \mathcal N(\boldsymbol{\mu}_Q, \boldsymbol{\Sigma}_Q)$ supported on $\mathbb R^d$, the map $T(\xx) = \mathbf{A}(\xx - \mathbf{m}_P) + \mathbf{m}_Q$ is the optimal transport map from $P$ to $Q$ under the L2 cost function, where
\begin{equation}\label{eq:gaussian-ot}
    \mathbf{A} = \mathbf{\Sigma}_P^{-1/2} \big ( \mathbf{\Sigma}_P^{1/2} \mathbf{\Sigma}_Q \mathbf{\Sigma}_P^{1/2} \big )^{1/2} \mathbf{\Sigma}_P^{-1/2}.
\end{equation}
More generally, if $P$ is an arbitrary distribution with finite second moments, $T(\xx)$ will transport it to a distribution with mean $\boldsymbol{\mu}_Q$ and covariance $\boldsymbol{\Sigma}_Q$, and this map will minimize the cost $\E_{P}[ \| \xx - T(\xx) \|_2^2]$ \citep{dowson1982frechet}.

Given $k$ image classes, each containing tensors of shape $C \times H \times W$, we unroll the tensors into vectors of size $CHW$, then compute their means and covariance matrices,\footnote{Because this is a high-dimensional covariance matrix with dimension only 1-3 times smaller than the sample size, we apply the asymptotically optimal linear shrinkage method proposed by \citet{bodnar2014strong} to improve our estimate of the population covariance and increase numerical stability.} and plug these statistics into Eq.~\ref{eq:gaussian-ot} to get the $k(k - 1)$ optimal transport maps from each class to every other class.


\paragraph{Coordinatewise Quantile Normalization (CQN)}
\label{para:cqn}

Quantile normalization is a technique for making two scalar random variables identical in their statistical properties. When applied \emph{coordinatewise} to the input of a neural network, such as an image, it ensures that the coordinatewise marginals match those of a target distribution, while keeping the correlations between coordinates largely intact, as illustrated by how the edited Pekinese dog image in Fig~\ref{fig:first_order_examples} (\textbf{center}) remains a recognizable dog image. 

CQN works as follows. If a random variable $X$ has cumulative distribution function $F_{X}(x)$, the transformed variable $F_{X}(X)$ will have the standard uniform distribution $\mathrm{Unif}(0, 1)$. Conversely, given variables $U \sim \mathrm{Unif}(0, 1)$ and $Y$, the transformed variable $F_{Y}^{-1}(U)$ will be equal in distribution to $Y$. Composing these transformations together yields quantile normalization. It can be shown that $F_{Y}^{-1} \circ F_{X}$ is the optimal transport map from $X$ to $Y$ for a large class of cost functions \citep[Ch. 2.2]{santambrogio2015optimal}, and is thus ideal for editing the first order statistics of a distribution while minimally perturbing higher-order statistics.


\begin{figure*}
    \centering
    \includegraphics[trim=0 0 0 50, clip, width=\textwidth]
    {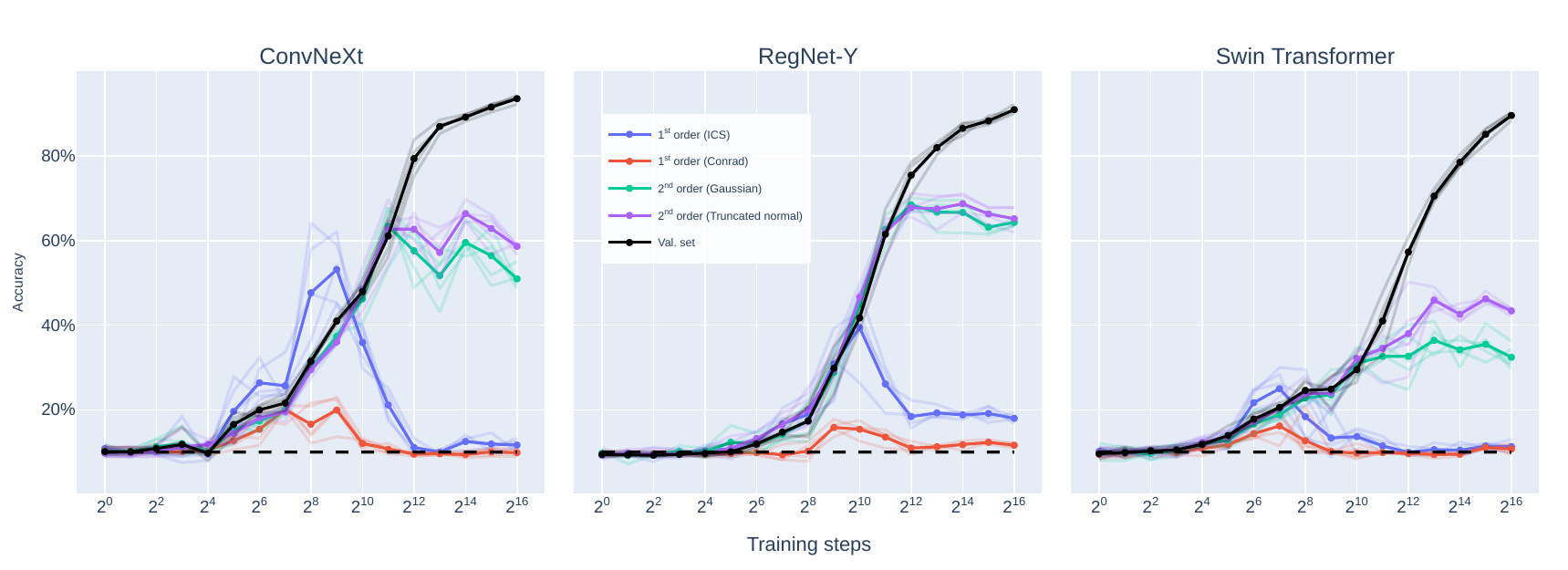}
    \caption{Accuracy of computer vision models being trained on the \emph{standard} CIFAR-10 training set, and being evaluated on maximum-entropy synthetic data with matching statistics of varying order.}
    \label{fig:cifar10-max-ent}
\end{figure*}

\paragraph{Bounded Shift} In Appendix~\ref{app:mean-shift}, we derive an algorithm for shifting the mean of an empirical distribution to a desired value, keeping its support constrained to the interval $[0, 1]$, and minimizing the transport cost. We use this algorithm to graft the mean of one class onto another, while ensuring the pixel intensities of the edited images are valid.

\subsection{Maximum Entropy Sampling}
\label{sec:max-entropy}

We can operationalize Criterion~\ref{item:deletion} using the principle of maximum entropy, which provides a principled method for constructing probability distributions based on ``partial knowledge'' \citep{jaynes1957information}. Here the partial knowledge consists of low-order statistics derived from a training dataset, but we otherwise want to minimize the information content of the higher-order statistics. We therefore want to construct the maximum entropy distribution $P$ consistent with these low-order statistics,\footnote{$P$ can be thought of as the ``least informative'' distribution that satisfies the constraints that its mean and covariance should match those of our original data distribution.} then evaluate a neural network on samples drawn from $P$.

Famously, the maximum entropy distribution supported on $\mathbb R^d$ with known mean $\boldsymbol{\mu}$ and covariance matrix $\boldsymbol{\Sigma}$ is the Gaussian distribution $\mathcal{N}(\boldsymbol{\mu}, \boldsymbol{\Sigma})$. We therefore use Gaussians in many of our experiments in Sec~\ref{sec:image-classification}. In addition to Gaussians, we use hypercube constrained sampling to generate synthetic samples using second and third order statistics, and we use two first-order methods (Conrad sampling and independent coordinate sampling). We explain these methods below.

\paragraph{Hypercube Constraints} One problem with using Gaussians to generate synthetic images is that natural images are constrained to a hypercube: RGB pixel intensities are in the range $[0, 255]$, but nonsingular Gaussian distributions assign positive probability density to all of $\mathbb R^d$, so that a typical sample will often lie outside the hypercube of natural images.\footnote{Strictly speaking, Gaussians also violate the assumption of compact support that we made earlier. In high dimension, though, almost all the probability mass of a Gaussian is contained in the typical set, a compact region near the boundary of an ellipsoid surrounding the mean \citep{carpenter2017typical}.} We might expect neural networks to quickly adapt to such a simple box constraint on the support, so we would like to subject our synthetic images to this constraint.

In the 1D case, the maximum entropy probability density with known mean and variance supported on a finite interval $[a, b]$ has the form $p(x) = \exp(-\lambda_0 - \lambda_1 x - \lambda_2 x^2)$ \citep{dowson1973maximum},\footnote{For some values of the Lagrange multipliers, the formula corresponds to a truncated normal distribution. For sufficiently large variances, the density takes on a U-shape.} but we are unaware of an analytic formula for the Lagrange multipliers in the multidimensional case, making it intractable to solve.\footnote{The log-density of the multidimensional max entropy distribution must be a quadratic form, just like the multivariate normal, but the ``scale'' matrix may not be p.s.d., and solving for the parameters seems challenging.}

We prove in Theorem~\ref{thm:conrad} that the maximum entropy distribution supported on $[0, 1]$ with a fixed mean $\mu \neq \frac{1}{2}$ and unconstrained variance takes the form $p(x) = \frac{b \exp(-b x + b)}{\exp(b) - 1}$, where the parameter $b$ can be found using Newton's method. This formula is not well-known, although an alternative derivation can be found in \citet{conrad2004probability}. To isolate the effect of first-order statistics, we first fit a Conrad distribution to the mean of each coordinate of the images. We then generate synthetic images using \href{https://en.wikipedia.org/wiki/Inverse_transform_sampling}{inverse transform sampling} to produce a value for each coordinate independently. 

\paragraph{Approximate Sampling}\label{para:approximate-sampling} For many sets of constraints, there is no known closed-form solution for the density, precluding sampling techniques like Markov chain Monte Carlo. For these cases, we propose a novel technique for approximate sampling: use gradient-based optimization to directly produce a finite set of samples whose statistics match the desired ones, while maximizing the Kozachenko-Leonenko estimate for the entropy of the implicit population distribution \citep{kozachenko1987sample, sablayrolles2018spreading}. See \href{https://github.com/EleutherAI/features-across-time/blob/d8d7e736b67483904d95db37f7d8ca30fe39e499/scripts/script\_utils/truncated\_normal.py#L17}{\texttt{truncated\_normal.py}} in our codebase for implementation details and Appendix~\ref{app:computational_requirements} for a discussion of computational and memory requirements.

\paragraph{Independent Coordinate Sampling (ICS)}\label{sec:coordinatewise-sampling} In the preceding sections, we decomposed the input distribution into its moments. Another possible decomposition is given by Sklar's theorem, which states that the distribution of any random vector $(X_1, \ldots, X_d)$ is uniquely determined by its coordinatewise marginal CDFs $F_{X_i}(x) = \mathbb P(X_i \leq x)$ and a \emph{copula} function $C: [0, 1]^d \rightarrow [0, 1]$ that combines the marginal CDFs into a multivariate CDF $F_{X}(\mathbf{x}) = \mathbb P(X_1 \leq x_1, \ldots, X_d \leq x_d)$ \citep{sklar1959fonctions}. The maximum entropy copula simply takes the product of the marginal CDFs, and corresponds to a random vector with independent coordinates. We can efficiently sample from this distribution by estimating an empirical CDF for each coordinate, then sampling from each CDF independently.

By constraining the coordinatewise marginals, we ensure that all of the \emph{homogeneous} moments, or moments of the form $\E[(x_i)^n]$, match those of the true data distribution, while the \emph{mixed} moments, e.g. $\E[x_i x_j]$ for $i \neq j$, will generally not match. In high dimension, almost all moments of order greater than one are mixed rather than homogeneous, so ICS matches the first order moments and almost none of the higher order ones. 

\subsection{Discrete Domains}
\label{sec:discrete-domains}

Neural networks use embeddings to convert discrete inputs into vectors of real numbers. The embedding operation can be viewed as a matrix multiplication, wherein the discrete inputs are converted into one-hot vectors we then multiply by the embedding matrix. If the input is a sequence, the result is a sequence of one-hot vectors, or a one-hot matrix.

Just as we unroll images into vectors to compute their moments, we can similarly unroll one-hot matrices to compute their moments. Strikingly, we find that these moments correspond to token $n$-gram frequencies:\footnote{Our formal definition of the term ``$n$-gram statistic'' is non-standard in two respects: first, we include skip-grams (e.g. \textit{the \underline{\hspace{5mm}} dog}, where the underscore is a wildcard token), and second, it is tied to an absolute position in the sequence. However, the Pythia language models we will consider in this paper were trained on chunks of text of uniform length sampled from larger documents \citep{biderman2023pythia}, so the absolute position should not significantly affect the $n$-gram probabilities. We therefore assume in what follows that $n$-gram statistics exhibit translation invariance.}

\begin{restatable}{theorem}{ngrammoments}[$n$-gram statistics are moments]
    \label{thm:ngrams-are-moments}
    Let $\mathcal V^N$ be the set of token sequences of length $N$ drawn from a finite vocabulary $\mathcal V$, let $P$ be a distribution on $\mathcal V^N$, and let $f : \mathcal V^{N} \rightarrow \{0, 1\}^{N \cdot | \mathcal V |}$ be the function that encodes a length-$N$ sequence of tokens as a flattened concatenation of $N$ one-hot vectors of dimension $|\mathcal V|$. Let $f_{\sharp}P$ be the pushforward of $P$ through this one-hot encoding, i.e. its analogue in $\{0, 1\}^{N \cdot | \mathcal V |}$.

    Then every moment of $f_{\sharp}P$ is equal to an $n$-gram statistic of $P$ and vice versa.
\end{restatable}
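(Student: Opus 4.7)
The plan is to exploit the fact that the coordinates of $f(X)$ are 0/1 indicators, so their integer powers collapse, turning every moment into the probability of a pattern of the form ``position $i_1$ holds token $v_1$ \emph{and} $\ldots$ \emph{and} position $i_k$ holds token $v_k$,'' which is exactly a (skip-)$n$-gram statistic in the sense defined in the footnote.

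First I would index the $N \cdot |\mathcal V|$ coordinates of $f(X)$ by pairs $(i,v)$ with $i \in \{1,\ldots,N\}$ and $v \in \mathcal V$, so that $f(X)_{(i,v)} = \mathbf{1}[X_i = v]$. A multi-index $\alpha \in \mathbb N^{N \cdot |\mathcal V|}$ on these coordinates corresponds to a multiset of position-token pairs, and the moment of $f_{\sharp}P$ at $\alpha$ is
\begin{equation*}
    \E\Bigl[\prod_{(i,v)} \mathbf{1}[X_i = v]^{\alpha_{(i,v)}}\Bigr].
\end{equation*}
Since $\mathbf{1}[X_i = v]^k = \mathbf{1}[X_i = v]$ for every $k \geq 1$, only the support $S = \{(i,v) : \alpha_{(i,v)} \geq 1\}$ matters, so it suffices to treat the case $\alpha \in \{0,1\}^{N \cdot |\mathcal V|}$, i.e., $\alpha$ is the indicator of some subset $S$ of position-token pairs.

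Next I would split into cases according to whether $S$ is \emph{position-consistent}, meaning it contains at most one pair per position. If $S$ contains both $(i,v)$ and $(i,v')$ with $v \neq v'$, then $\mathbf{1}[X_i = v]\mathbf{1}[X_i = v'] = 0$ identically, so the moment vanishes; this matches the degenerate $n$-gram probability of an unsatisfiable pattern. Otherwise $S = \{(i, v_i) : i \in I\}$ for some $I \subseteq \{1,\ldots,N\}$ and tokens $v_i \in \mathcal V$, and
\begin{equation*}
    \E\Bigl[\prod_{i \in I} \mathbf{1}[X_i = v_i]\Bigr] = P\bigl(X_i = v_i \text{ for all } i \in I\bigr),
\end{equation*}
which is exactly the skip-$n$-gram statistic (with $n = |I|$) pinning position $i$ to token $v_i$ for $i \in I$ and leaving the remaining positions wild. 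The converse direction uses the same identity read backwards: every skip-$n$-gram probability of this form arises as the moment whose multi-index is the 0/1 indicator of the corresponding $S$.

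The main obstacle, such as it is, will be stating the correspondence cleanly enough to absorb the boundary cases: the empty multi-index $\alpha = \mathbf 0$ must correspond to the empty $n$-gram (probability $1$), and position-inconsistent $\alpha$'s must correspond to impossible patterns (probability $0$). No machinery beyond idempotence of indicators and linearity of expectation is required; the content of the theorem is the bookkeeping that identifies ``multi-index on the one-hot coordinates'' with ``partial assignment of tokens to positions.''
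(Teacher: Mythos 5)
Your proof is correct and takes essentially the same route as the paper's: idempotence of the $0/1$ indicator coordinates reduces every multi-index to its binary support, products of indicators become logical conjunctions, and expectations of conjunctions become (skip-)$n$-gram probabilities, with the converse obtained by reading the same identity backwards. You are in fact slightly more careful than the paper, which does not explicitly address the position-inconsistent multi-indices (two distinct tokens pinned to the same position) whose moments vanish and which strictly speaking fall outside the ``unique indices'' requirement of Def.~\ref{def:ngram-statistic}.
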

Furthermore, for a fixed embedding matrix $\mathbf{E}$, two distributions over token sequences that have the same $n$-gram frequencies up to order $k$ will induce distributions over embedding space with the same moments up to $k$:
\begin{restatable}{theorem}{embeddingmoments}[Equal embedding moments]\label{thm:embeddingmoments}
    Let $\mathbf{E} \in \mathbb R^{|\mathcal{V}| \times d}$ be an embedding matrix, and let $P$ and $Q$ be two distributions over $\mathcal V^{N}$. Then if $P$ and $Q$ have the same $n$-gram statistics up to order $k \geq 1$, their embeddings under $\mathbf{E}$ have the same moments up to order $k$.
\end{restatable}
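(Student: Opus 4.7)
The plan is to chain the claim through Theorem~\ref{thm:ngrams-are-moments} and then appeal to the elementary fact that a linear map never raises the polynomial degree of an observable. By that theorem, the hypothesis that $P$ and $Q$ agree on all $n$-gram statistics of order at most $k$ is equivalent to saying that the one-hot pushforwards $f_\sharp P$ and $f_\sharp Q$ on $\{0,1\}^{N|\mathcal V|}$ agree on every moment of order at most $k$. So it suffices to show that equal moments of $f_\sharp P$ and $f_\sharp Q$ up to order $k$ imply equal moments up to order $k$ for their embeddings in $\mathbb R^{Nd}$.

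Next I would make precise the linear structure of the embedding operation. Writing a concatenated one-hot vector as $\xx = (\xx^{(1)}, \ldots, \xx^{(N)}) \in \{0,1\}^{N|\mathcal V|}$ with each block $\xx^{(t)}$ of length $|\mathcal V|$, the embedding step is the block-diagonal linear map $\phi(\xx) = (\mathbf{E}^\top \xx^{(1)}, \ldots, \mathbf{E}^\top \xx^{(N)}) \in \mathbb R^{Nd}$. Every coordinate of $\phi(\xx)$ is a fixed, $\mathbf{E}$-dependent linear combination of coordinates of $\xx$.

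The key step is then short: for any multi-index $\alpha$ with $|\alpha| = m \leq k$, the monomial $\phi(\xx)^\alpha$ expands as a polynomial in the coordinates of $\xx$ of total degree $m$. Taking expectations and invoking linearity, $\E_{\xx \sim f_\sharp P}[\phi(\xx)^\alpha]$ becomes a fixed linear combination, with coefficients depending only on entries of $\mathbf{E}$, of moments of $f_\sharp P$ of order at most $m$. The identical linear combination applied to moments of $f_\sharp Q$ yields $\E_{\xx \sim f_\sharp Q}[\phi(\xx)^\alpha]$, and by the first paragraph the underlying one-hot moments agree for all orders up to $k$, so the two expected values coincide.

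The only mild obstacle is bookkeeping when expanding $\phi(\xx)^\alpha$ across the $N$ blocks and relabelling the resulting one-hot-space multi-indices; nothing substantive intervenes. The entire content is that $\phi$ is linear, hence no embedding moment of order $\leq k$ can depend on a source moment of order $> k$. No stationarity, positivity, or regularity assumptions beyond those already in force are needed.
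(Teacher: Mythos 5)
Your proposal is correct and follows essentially the same route as the paper's proof: invoke Theorem~\ref{thm:ngrams-are-moments} to translate the $n$-gram hypothesis into equality of moments of the one-hot pushforwards up to order $k$, then observe that the embedding is a (block-diagonal) linear map, so each embedding-space monomial of degree $\leq k$ expands into a linear combination of one-hot moments of order $\leq k$, which agree by hypothesis. Your write-up is if anything slightly more explicit about the block structure of the linear map; no substantive difference.
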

For proofs, see Appendix~\ref{app:discrete-moments}.

Given this equivalence, we can test the DSB in language models with maximum entropy sampling, just like the computer vision case. Given known $n$-gram frequencies up to order $k$, we produce maximum entropy samples using a $k$-gram autoregressive language model. For example, if only bigram frequencies are known, this corresponds to a Markov chain where the distribution of each token depends only on the token immediately preceding it.

\section{Image Classification}
\label{sec:image-classification}

\subsection{Datasets.} Because Gaussian optimal transport (Sec.~\ref{para:gaussian-ot}) requires $O(d^3)$ compute and $O(d^2)$ memory,\footnote{The covariance matrix has $d^2$ elements, where $d$ is the number of pixels, so it is actually $O(n^4)$ in the height or width dimension of the image. We also ran into a software limitation in early experiments where NumPy and PyTorch eigensolvers would crash when fed the very large covariance matrices produced by high-resolution image datasets; see \href{https://github.com/pytorch/pytorch/issues/92141}{PyTorch issue \#92141} for discussion.} we focus on datasets with $32 \times 32$ or $64 \times 64$ resolution images for our primary experiments. Specifically, we examine the popular image classification datasets CIFAR-10 \citep{krizhevsky2009learning}, Fashion MNIST \citep{xiao2017fashion}, MNIST \citep{lecun1998gradient}, and SVHN \citep{netzer2011reading}.

We also build a new image classification dataset, \textbf{CIFARNet}, consisting of 200K images at $64 \times 64$ resolution sampled from ImageNet-21K, using ten coarse-grained classes that roughly match those of CIFAR-10. The larger number of images per class (20K) allows us to get a good estimate of the class-conditional covariance matrices needed for Gaussian optimal transport, which at this resolution contain $(3 \times 64 \times 64)^2 \approx 1.5 \times 10^8$ entries each. See Appendix~\ref{app:cifarnet} for more details on CIFARNet.


\subsection{Architectures.} We focus on state-of-the-art computer vision architectures in our experiments. Specifically, we use ConvNeXt V2 \citep{woo2023convnext} and Swin Transformer V2 \citep{liu2022swin}, which \citet{goldblum2023battle} recently found to have the best performance on a variety of tasks. We train for $2^{16}$ steps with batch size 128, using the AdamW optimizer \citep{loshchilov2018decoupled} with $\beta_1 = 0.9, \beta_2 = 0.95$, and a linear learning rate decay schedule starting at $10^{-3}$ with a warmup of 2000 steps \citep{ma2021adequacy}.\footnote{\label{foot:svhn}We found in early experiments that many models require a lower learning rate to converge on SVHN. We therefore use a learning rate of $10^{-4}$ for ConvNeXt and Swin on this dataset.} For data augmentation, we employ RandAugment \citep{cubuk2020randaugment} followed by random horizontal flips and random crops.

To examine the effect of model scale on our results, we sweep over the Atto, Femto, Pico, Nano, and Tiny sizes for ConvNeXt V2, and we also construct Swin Transformers of roughly analogous sizes.\footnote{The smallest model described in \citet{liu2022swin} is Swin V2 Tiny, which weighs in at 49M parameters. We construct smaller Swin V2 sizes by copying the embedding dimension from the corresponding ConvNeXt V2 size.}

To ensure our results are insensitive to the choice of optimizer and learning rate schedule, we also perform experiments with RegNet-Y \cite{radosavovic2020designing} using SGD with momentum and no LR warmup.

\begin{figure*}[ht]
    \centering
    \includegraphics[width=0.9\textwidth]{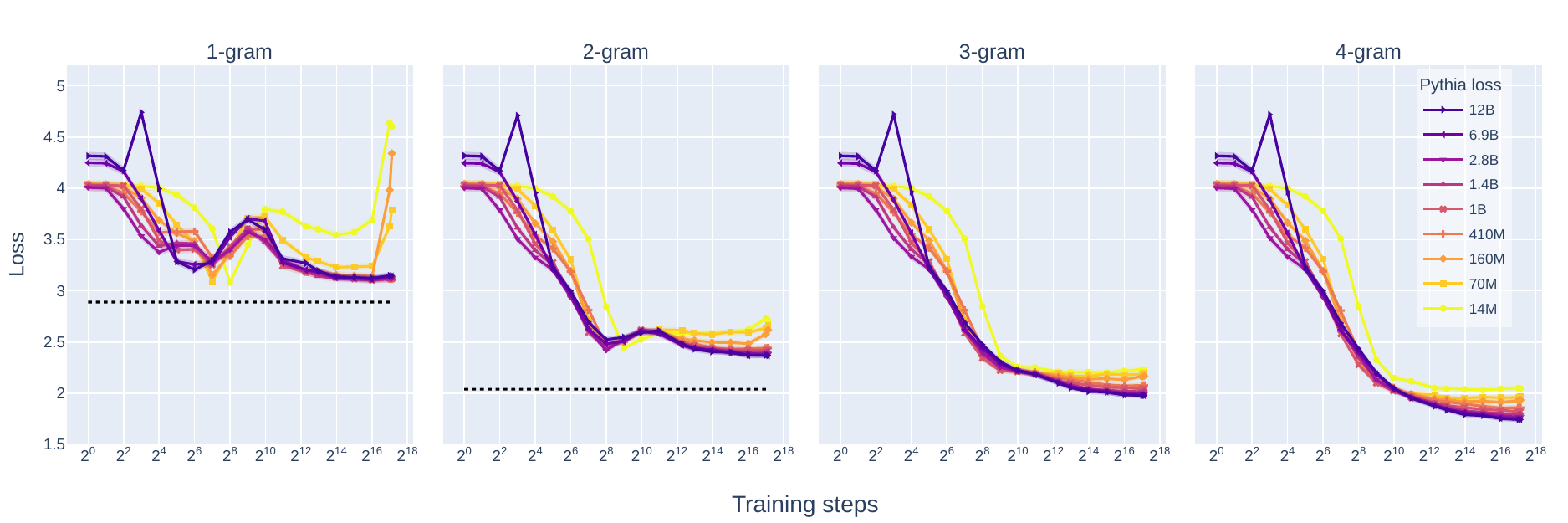}
    \caption{Cross-entropy loss of Pythia suite evaluated on 1- through 4-gram sequences ($N = 4096$.) For comparison, the Shannon entropy of the 1-gram distribution is 2.89 bits per byte (bpb), and is 2.04 bpb for the 2-gram distribution. The first two plots exhibit ``double descent'' scaling: loss reaches a trough between $2^6$ and $2^8$ steps, increases until $2^{10}$ steps, then decreases again as the model learns to match the data generating process in-context (Fig.~\ref{fig:in-context}).} 
    \label{fig:pythia-ngram-loss}
\end{figure*}

\subsection{Results}

We display our results on CIFAR10 in Figures~\ref{fig:cifar10-ot} and~\ref{fig:cifar10-max-ent}, see Appendix~\ref{app:detailed_results} for other datasets.

\paragraph{Optimal transport} We measure the effect of optimal transport interventions by computing the accuracy or loss of the model with respect to the \emph{target class}, rather than the source class. If the intervention is ineffective, we would expect the accuracy to be much lower than the random baseline of 10\%, because the model should confidently classify the images as belonging to the source class. Strikingly, all models we tested get substantially higher than 10\% accuracy w.r.t. the target labels, with ConvNeXt peaking at over 40\% accuracy on 2\textsuperscript{nd} order-edited images after $2^{10}$ training steps.

\paragraph{Maximum entropy sampling} We include four different conditions in our maximum entropy sampling experiments: 1\textsuperscript{st} order (ICS), 2\textsuperscript{nd} order (Gaussian sampling), and both 2\textsuperscript{nd} and 3\textsuperscript{rd} order plus a hypercube constraint.

Overall, we find that accuracy on first order samples peaks earlier in training and has a lower maximum than accuracy on second order samples, followed by the 2\textsuperscript{nd} order hypercube-constrained samples. Remarkably, for some datasets early in training, we find some models achieve \emph{higher} accuracy on the independent pixel samples than they do on images sampled from the real validation set!

\paragraph{Non-monotonicity} Across all datasets, we observe some degree of \emph{non-monotonicity} in the accuracy curves: while models are quite sensitive to low-order moments early in training, they become less sensitive by the end, with accuracy often dipping below the random baseline. The degree of non-monotonicity varies by dataset, however. Very simple datasets like MNIST and Fashion MNIST show very little non-monotonicity, likely because the first and second moments of the data distribution are sufficient to produce very realistic-looking samples (Fig.~\ref{fig:fake-images}).

Overall, we found that model scale has a remarkably small effect on the learning curves, so we display curves averaged over scales in bold, with individual model scales shown as translucent lines. 

\section{Language Modeling}

\begin{figure*}[ht]
    \centering
    \includegraphics[width=0.9\textwidth]{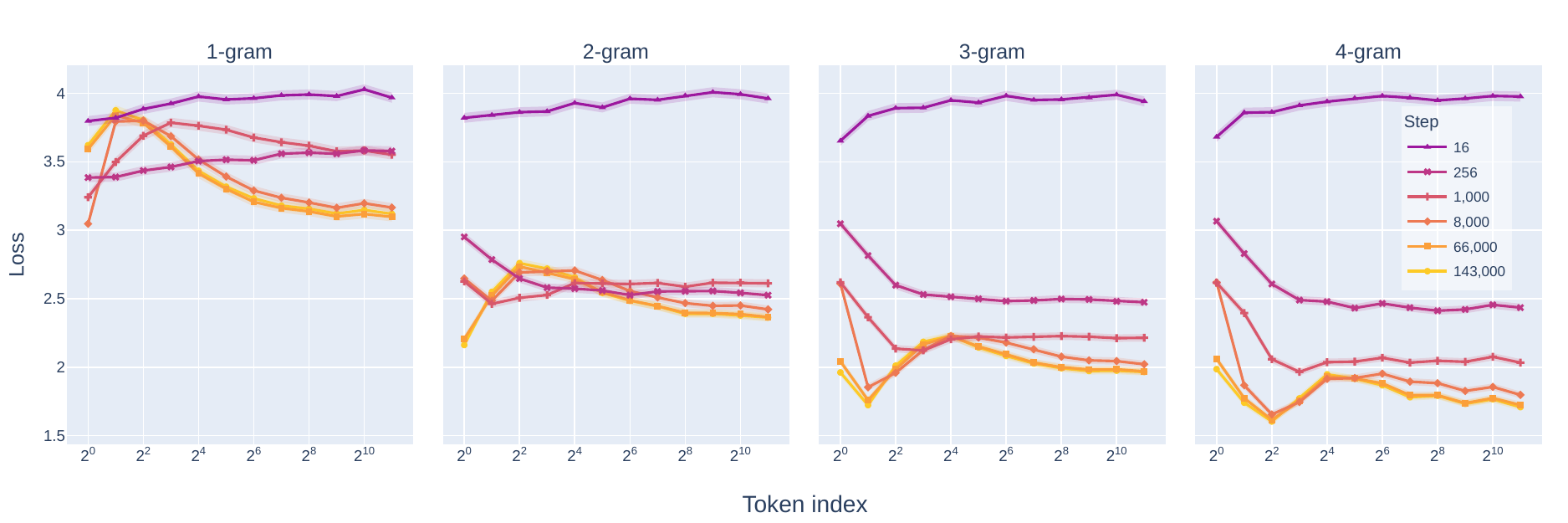}
    \caption{Mean loss over n-gram sequence positions at six Pythia-12B training steps. In-context learning of the maximum entropy bigram sequence occurs after step 8,000. Some in-context learning of the maximum entropy unigram sequence occurs by step 1,000, with more after step 8,000.}
    \label{fig:in-context}
\end{figure*}

To test the distributional simplicity bias in a discrete domain, we study EleutherAI's Pythia language model suite \cite{biderman2023pythia}, for which checkpoints are publicly available at log-spaced intervals throughout training. Model parameter counts range from 14 million to 12 billion.

There are ten independent training runs for Pythia 14M, 70M, and 160M publicly available on the HuggingFace Hub, each using a different random seed. There are also five available seeds for Pythia 410M. We take advantage of these additional runs to examine the effect of random seed on our results. We also trained custom variants of Pythia 14M and 70M with an extended learning rate warmup period of 14,300 steps to isolate the effect of LR warmup.

While we include skip-grams (e.g. \textit{the \underline{\hspace{5mm}} dog}) in our formal definition of $n$-gram frequency (Def.~\ref{def:ngram-statistic}), we do not include them in these experiments for tractability reasons: they would greatly increase the memory and storage requirements of maximum entropy sampling. We hope to explore the effect of skip-gram statistics in future work.

\paragraph{$\boldsymbol{n}$-gram language models} We compute token unigram and bigram frequencies across Pythia's training corpus, the Pile \citep{gao2020pile}, and use these statistics to construct maximum entropy $n$-gram language models. We autoregressively sample sequences of length 2049 from the $n$-gram LMs, and evaluate Pythia's cross-entropy loss on these maximum entropy samples at each checkpoint. We repeat the procedure for 3- and 4-grams using a subset of Pythia's training corpus.
%


Additionally, we evaluate Pythia 12B's cross-entropy loss over each token position in the maximum entropy $n$-gram sequences for six different checkpoints in order to detect the development of in-context learning. If in-context learning is involved in making predictions for these sequences at a training step, cross-entropy loss should decrease over successive token positions in the sequence for that step \cite{olsson2022context}.

\subsection{Results}

We display our results on the Pythia suite in Fig.~\ref{fig:pythia-ngram-loss}. See Appendix~\ref{app:detailed_results} for alternate model seeds and learning rate warmup. Overall we find that the random seed has very little effect on the learning curves, and lengthening the LR warmup period did not consistently affect their overall shape.

\paragraph{$\boldsymbol{n}$-gram sequence loss}

Consistent with the image classification tasks, unigram sequence loss consistently reaches its lowest point before bigram sequence loss and has a higher minimum value.

Across all models, we observe non-monotonicity in the unigram and bigram sequence loss curves, where loss steeply reduces and then increases to a lesser extent. However, unlike in the image classification tasks, the loss reverts to a monotonic regime later in training. We hypothesize that this is caused by the development of in-context learning sufficient to improve n-gram sequence predictions. We observe correlational evidence in the $n$-gram sequence loss over increasing token indices and training steps in Pythia 12B (Fig.~\ref{fig:in-context}), where in-context learning seems to emerge in the same training step where the non-monotonic regime ends. Fascinatingly, smaller models seem to resume the standard ‘U’-shaped loss pattern in the later portions of training.\footnote{Arguably, this pattern applies to models of all sizes on unigram sequences, but the tiny increases in loss for the larger models are within the margin of error for these experiments.}

We speculate that this behavior may arise from a form of ``catastrophic forgetting'', in which all models initially learn low-order $n$-gram statistics, which are eventually eclipsed by more sophisticated features. Larger models have greater representational capacity, and so are better able to retain these early $n$-gram features.

We do not observe non-monoticity in the higher order $n$-gram sequence loss curves. However, the 3-gram loss plateaus at a point consistent with the non-monoticity observed in 1- and 2-grams, suggesting that the effect could be present to a lesser extent.

\paragraph{In-context learning}

We follow \citet{kaplan2020scaling} in defining in-context learning as decreasing loss at increasing token indices. We find that loss is uniform across token positions in early training steps, but slowly decreases at increasing token indices in later steps, consistent with the presence of in-context learning (Fig.~\ref{fig:in-context}). 

We observe an initial increase in loss early in each sequence. This is likely due to the fact unigram sequences are indistinguishable from real sequences at the first position, and bigram model predictions are indistinguishable from real sequences at the first and second positions.


\section{Conclusion}


We propose two criteria that operationalize what it means for models to exploit moments of a given order, then describe methods of generating synthetic data that test whether a network satisfies both criteria, using theoretically grounded approaches relying on optimal transport theory and the principle of maximum entropy. We extend our analysis to discrete sequences by proving an equivalence between $n$-gram statistics and statistical moments.

We find new compelling empirical evidence that neural networks learn to exploit the moments of their input distributions in increasing order, and further find ``double descent'' in the degree to which LMs are able to model sequences sampled from low-order data statistics, driven by in-context learning on longer sequences. Our contributions strengthen the case for the distributional simplicity bias (DSB), refine our understanding of how DSB influences early learning dynamics, and provide a foundation for further investigations into DSB.

\subsection*{Acknowledgements}

We are thankful to Open Philanthropy for \href{https://www.openphilanthropy.org/grants/eleuther-ai-interpretability-research/}{funding this work}. We also thank \href{https://newscience.org/}{New Science} and \href{https://stability.ai/}{Stability AI} for providing computing resources.

\subsection*{Impact statement}

The goal of this work was to advance our understanding of the generalization behavior of neural networks throughout training, in the hope that this will enable the development of more robust and predictable machine learning models.



\bibliography{citations}
\bibliographystyle{icml2024}

\newpage
\appendix
\onecolumn
\section{Additional Related Work}
\label{app:related-work}

Extensive prior work has investigated neural network simplicity bias and learning dynamics. We highlight several prior research directions that usefully contrast our own approach.

\subsection{Simplicity bias}
One common approach studies simplicity biases in the parameter-function maps of neural network architectures. Such explanations posit that neural networks implement favorable priors, meaning that most network parameterizations, under commonly used initialization distributions, that reach good performance on the training data will also generalize to the test data, regardless of specific details about the optimization process used to find such parameterizations. 

\citet{valle2018deep} investigated such architectural simplicity biases by using Gaussian process-based approximations to neural networks \cite{lee2018deep} to estimate the Bayesian posterior produced by randomly sampling neural network parameterizations, conditional on those networks achieving perfect training loss, and showed the resulting posterior correlated well with the odds of SGD-based training finding a given function. \citet{chiang2023loss} validate this perspective by showing that a variety of non-gradient based optimizers, including unbiased sampling of random initializations, are still able to generalize from training to testing data. 

Another approach is to construct a simplified, theoretically tractable model of neural network learning dynamics, then analyzing the resulting model to find which types of functions it predicts networks will be most inclined to learn. The neural tangent kernel \cite{NEURIPS2018_5a4be1fa}, scales network widths to infinity, whereupon networks are limited to performing kernel regression with their initialization kernel. Thus, model inductive biases are determined by the spectrum of the initialization kernel's eigenfunctions, which have strong simplicity biases for commonly used architectures \cite{canatar2021spectral, baratin2021implicit, bietti2019inductive}.

\subsection{Learning order}

\citet{xu2019training} proposed the Frequency Principle, the tendency of neural networks to first fit low-frequency Fourier components of a given target function, before moving on to fit higher frequency components, and empirically demonstrated this tendency on real image classification problems and synthetic datasets. Subsequent works further explored how neural network learning dynamics relate to the representation of training data in the frequency domain \cite{pmlr-v97-rahaman19a, xu2019frequency, pmlr-v119-basri20a, Xu_Zhou_2021}. Our work is similar in that we also aim to connect neural network learning order to simple mathematical properties of the training data, though we use distributional statistics, rather than frequency. 

\citet{choshen-etal-2022-grammar} empirically studied learning dynamics of neural language models by tracking which grammatical patterns different networks learn to model across their training trajectories, and comparing network behavior across training to alternative language modeling approaches, such as $n$-gram models. They found that neural language models initially match the behaviors of unigram and bigram models early in training, then diverge as training progresses. These results are inline with our own findings on learning order in neural language models, and are consistent with a DSB-driven perspective on neural network learning dynamics.


\newpage
\section{CIFARNet dataset}
\label{app:cifarnet}

CIFARNet is based on the Winter 2019 version of ImageNet-21K. We selected the ten synsets from the ImageNet hierarchy which most closely matched the ten CIFAR-10 classes, with a bias toward broader synsets to maximize the dataset size:
\begin{itemize}
    \item Airplane: \texttt{n02691156}
    \item Automobile: \texttt{n02958343}
    \item Bird: \texttt{n01503061}
    \item Cat: \texttt{n02121620}
    \item Deer: \texttt{n02430045}
    \item Dog: \texttt{n02083346}
    \item Frog: \texttt{n01639765}
    \item Horse: \texttt{n02374451}
    \item Ship: \texttt{n04194289}
    \item Truck: \texttt{n04490091}
\end{itemize}
We ensured class balance by randomly sampling 20K images from each synset. Images were directly resized to $64 \times 64$ resolution without center cropping.

\newpage
\section{Detailed experimental results}
\label{app:detailed_results}

\subsection{CIFAR-10}

\begin{figure}[h!]
    \centering
    \includegraphics[width=\textwidth]{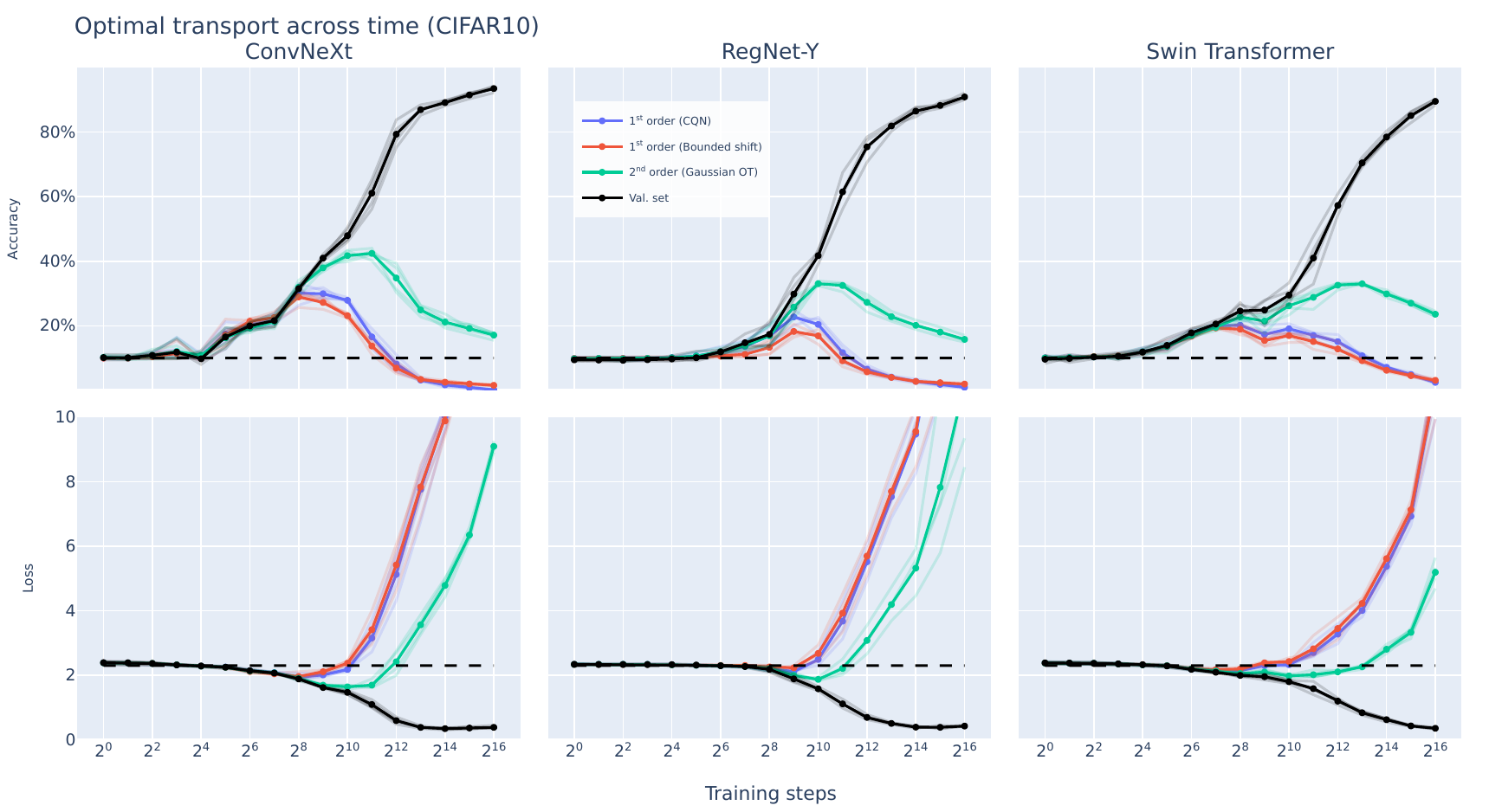}
    \includegraphics[width=\textwidth]{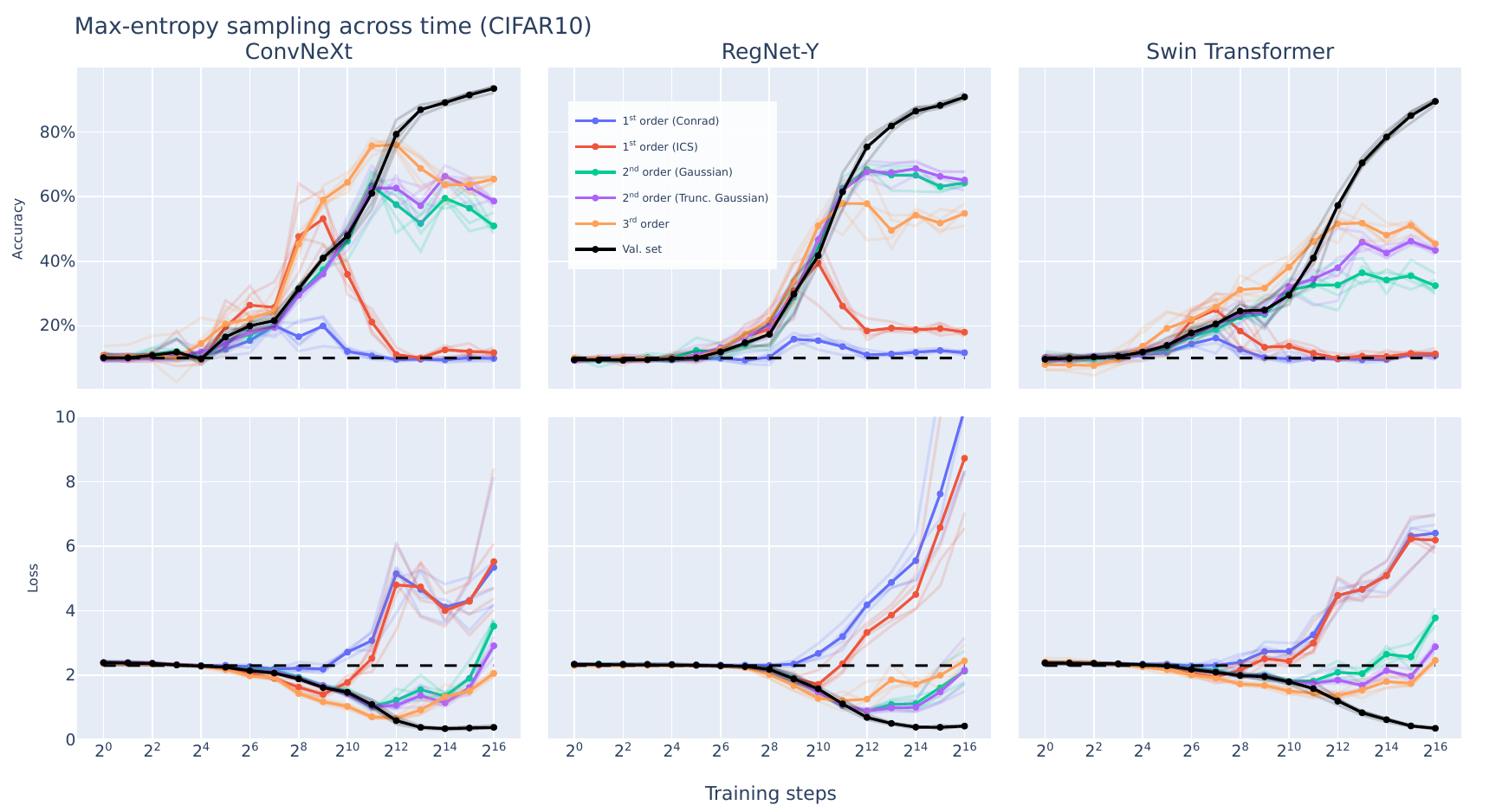}

    \label{fig:cifar10-full}
\end{figure}

\newpage
\subsection{CIFARNet}

\begin{figure}[h!]
    \centering
    \includegraphics[width=\textwidth]{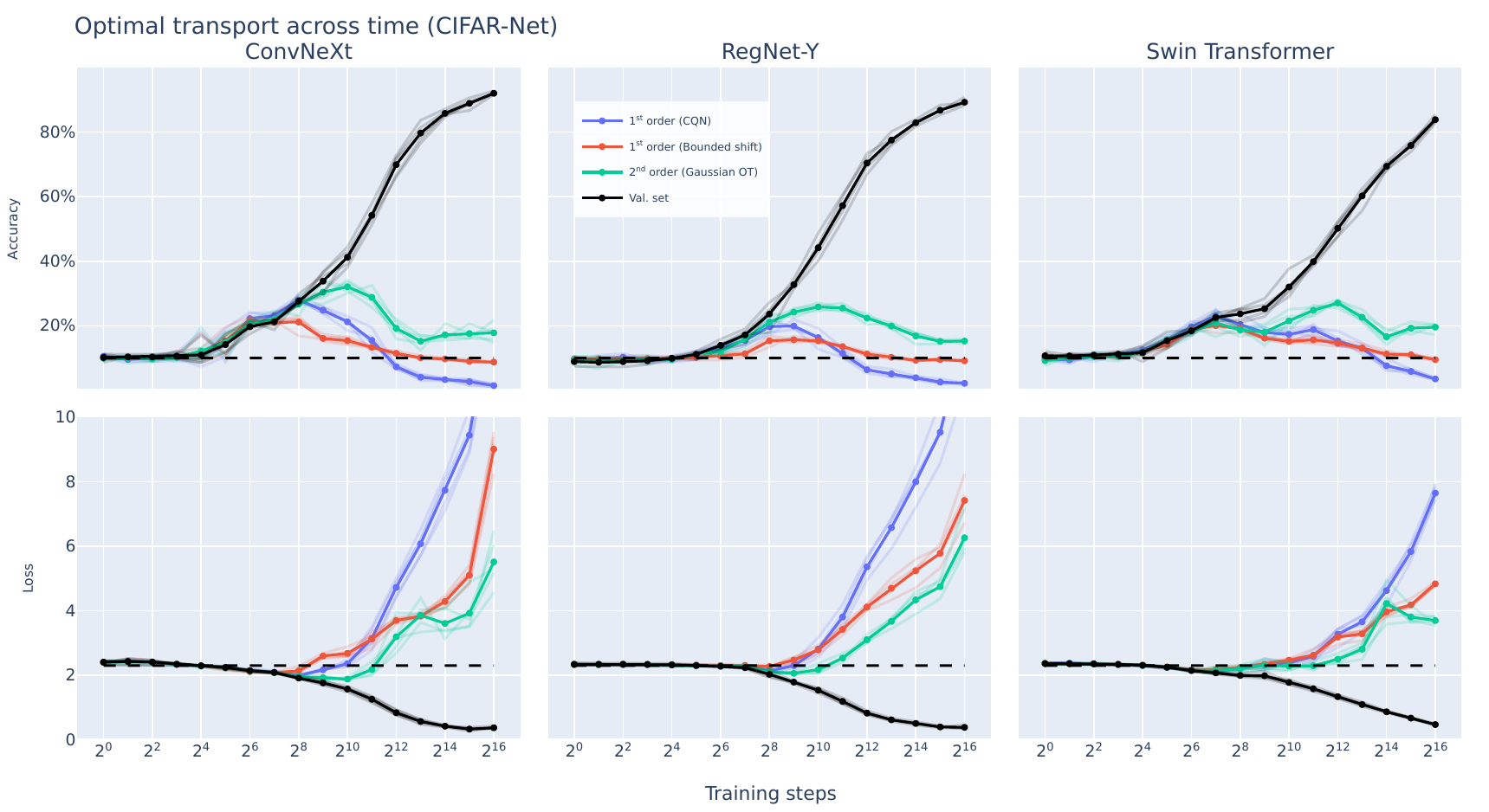}
    \includegraphics[width=\textwidth]{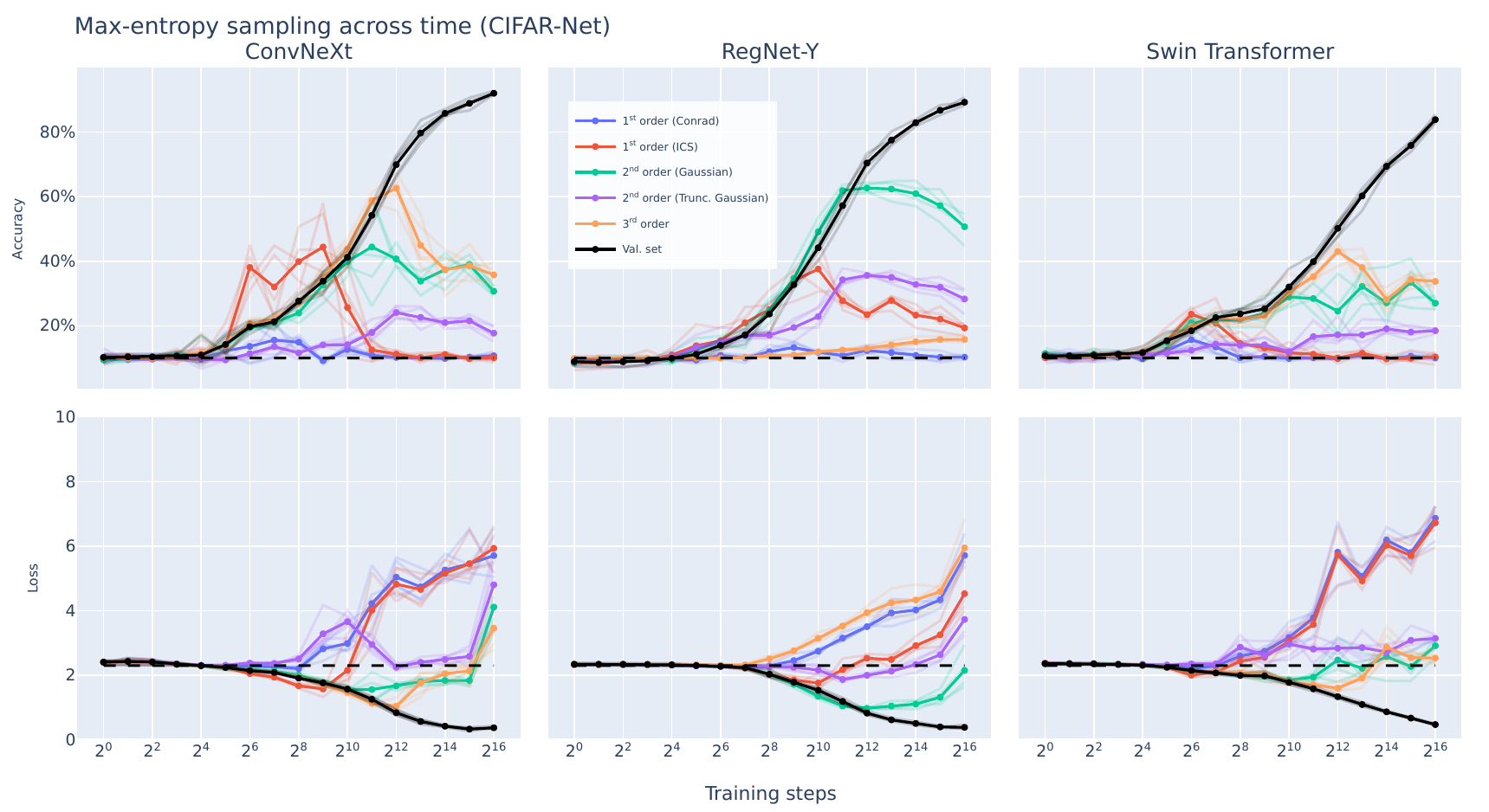}

    \caption{These results qualitatively mirror those of the lower resolution CIFAR-10 dataset (see above), except that the maximum accuracies attained on 2\textsuperscript{nd} order samples are somewhat lower. This may suggest that networks more quickly learn to use higher-order statistics when the input has higher dimensionality.}
    \label{fig:cifarnet}
\end{figure}

\newpage
\subsection{Fashion MNIST}

\begin{figure}[h!]
    \centering
    \includegraphics[width=\textwidth]{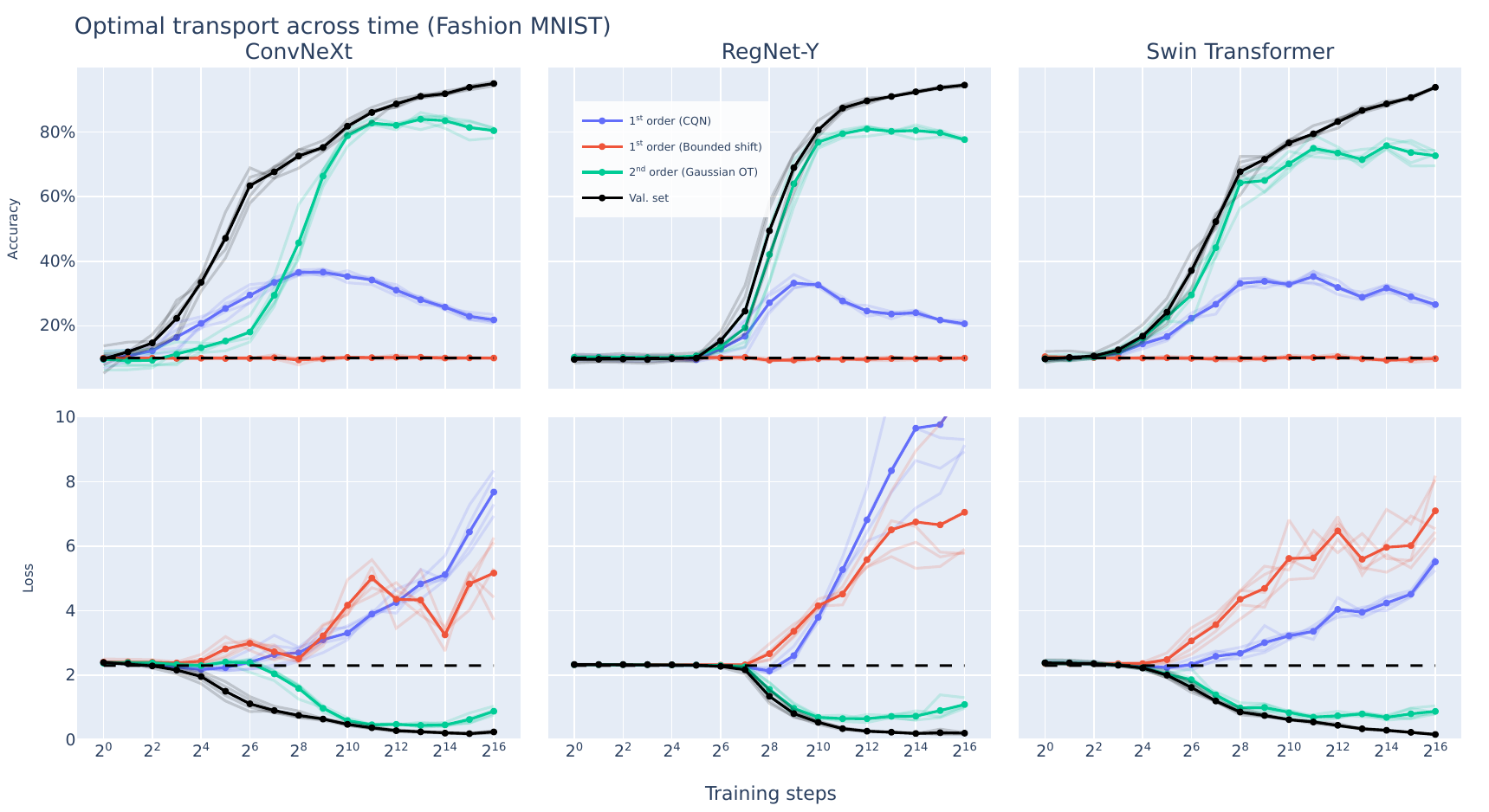}
    \includegraphics[width=\textwidth]{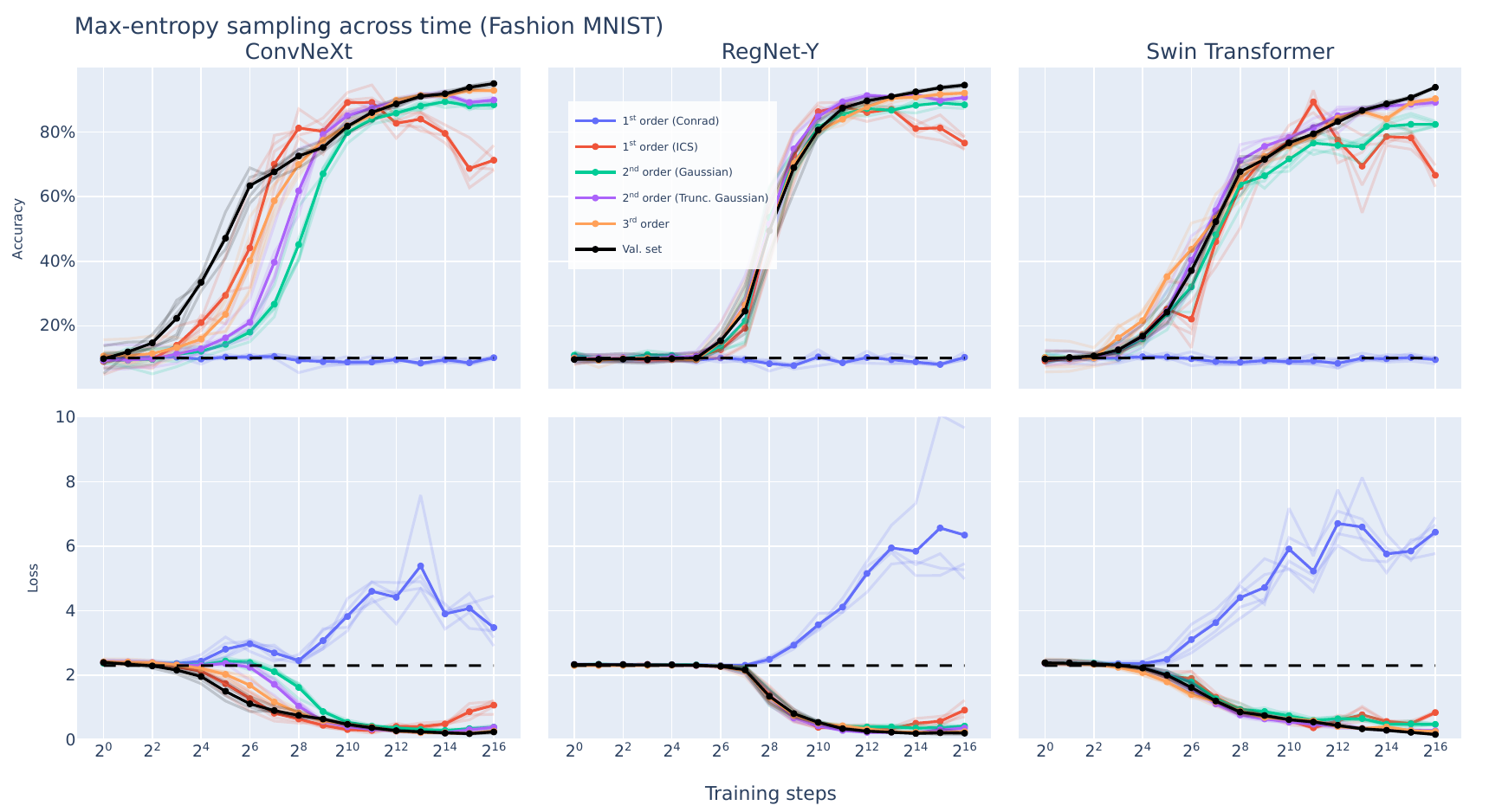}

    \caption{Fashion MNIST learning curves exhibit only a modest degree of non-monotonicity, likely because the first and second moments of the data distribution are sufficient to produce very realistic-looking samples (Fig.~\ref{fig:fake-images})}
    \label{fig:fashion-mnist}
\end{figure}

\newpage
\subsection{MNIST}
\label{app:mnist}

\begin{figure}[h!]
    \centering
    \includegraphics[width=\textwidth]{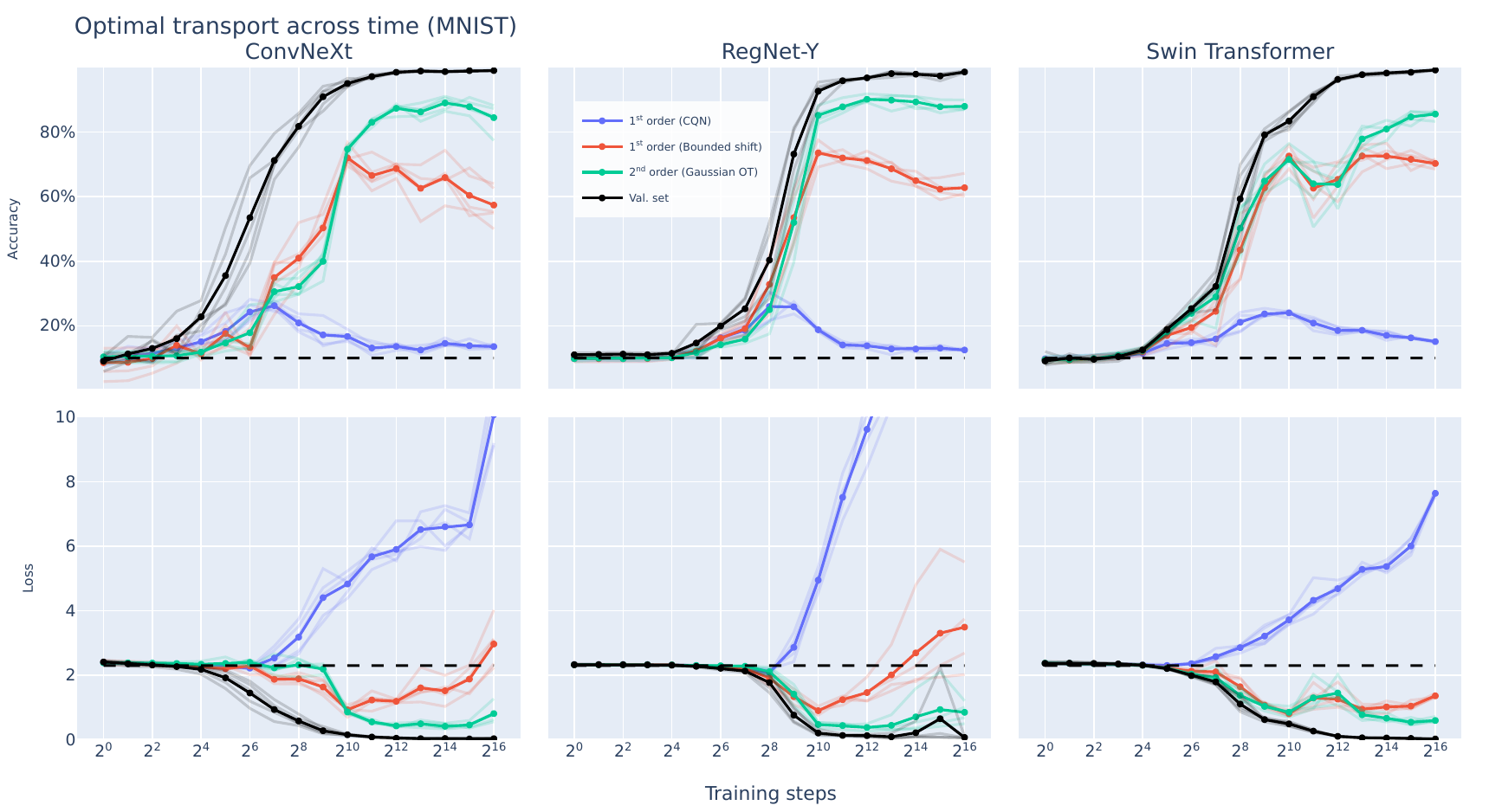}
    \includegraphics[width=\textwidth]{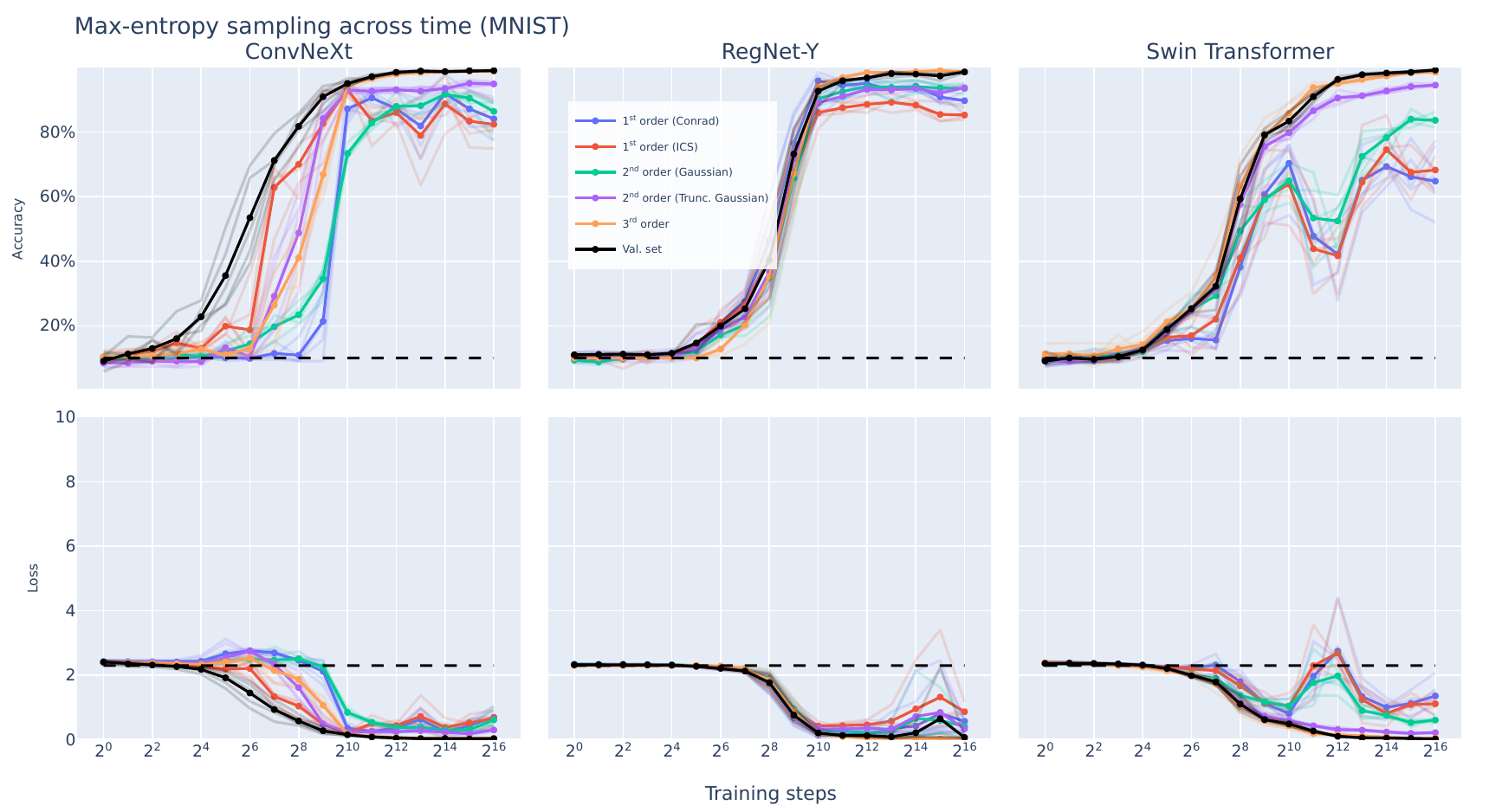}

    \caption{MNIST learning curves exhibit only a modest degree of non-monotonicity, likely because the first and second moments of the data distribution are sufficient to produce very realistic-looking samples (Fig.~\ref{fig:fake-images})}
    \label{fig:mnist}
\end{figure}

\newpage
\subsection{Street View Housing Numbers}

\begin{figure}[h!]
    \centering
    \includegraphics[width=\textwidth]{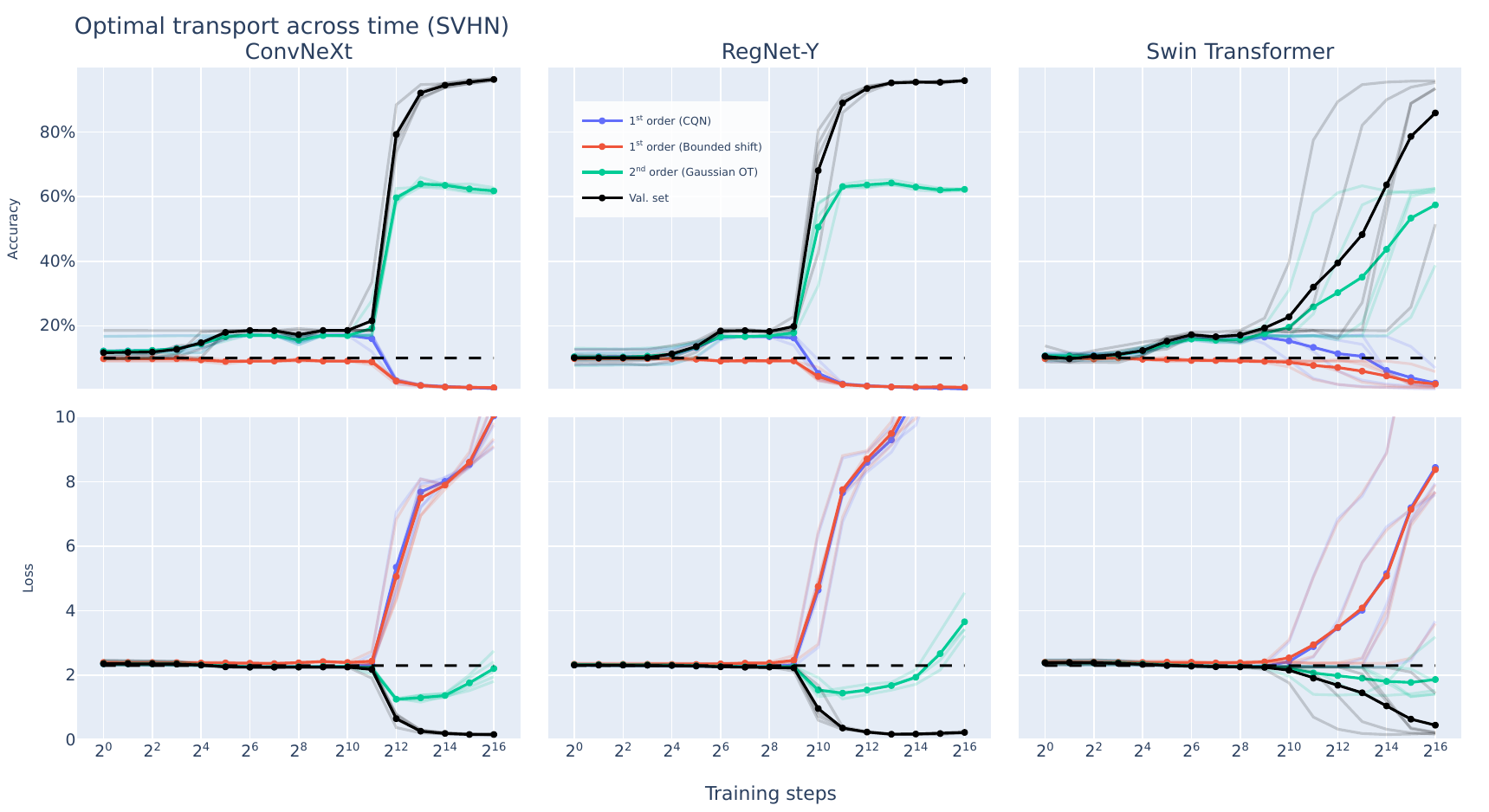}
    \includegraphics[width=\textwidth]{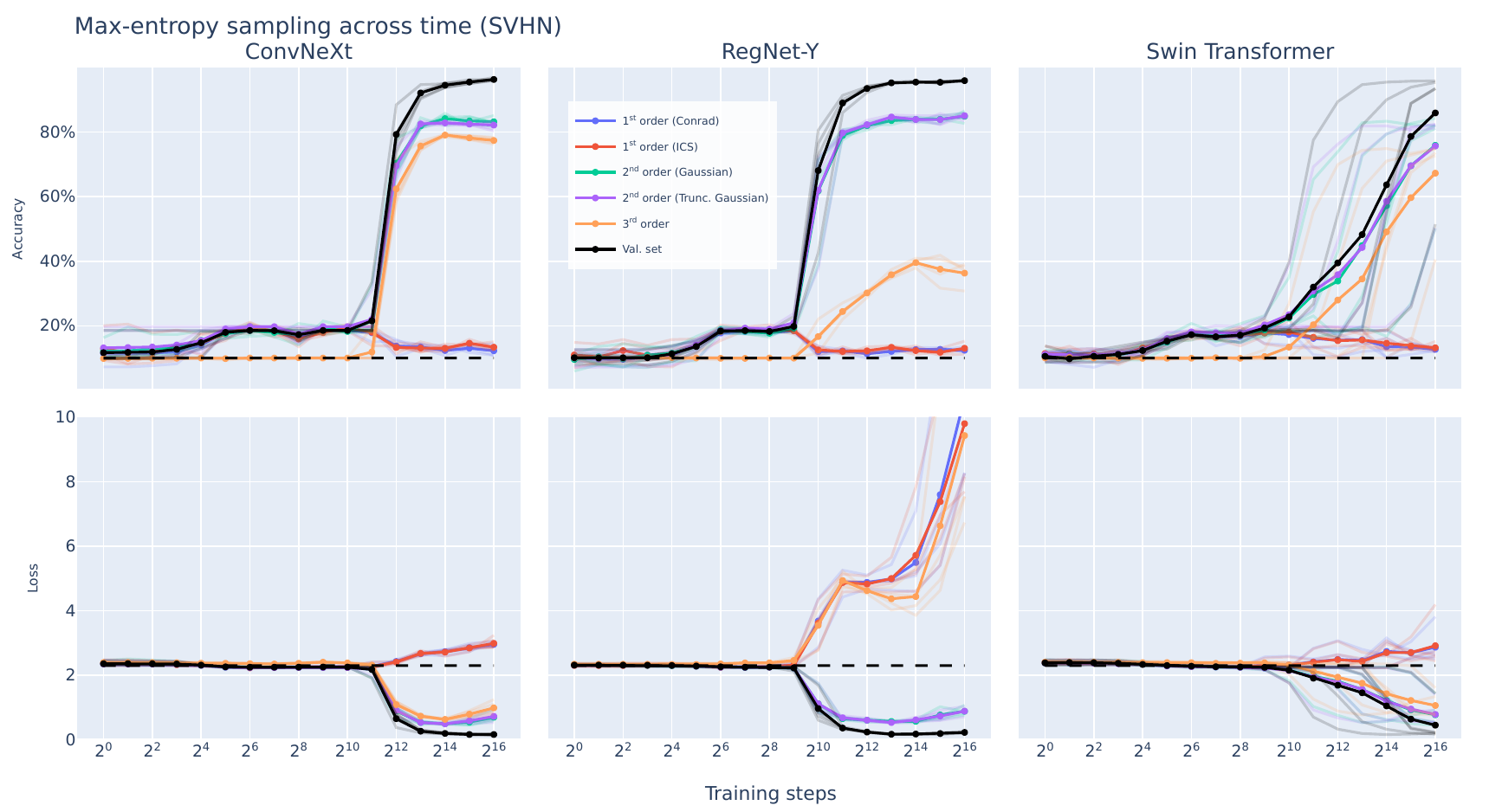}
    \caption{The Street View Housing Numbers dataset is somewhat of an outlier in that none of the models ever exceed random baseline accuracy on 1\textsuperscript{st} order synthetic images. We hypothesize this is because of the extreme diversity of colors, fonts, and background textures in SVHN, which make ``simple'' first order features less discriminative for classifying digits. We also found it necessary to use a smaller learning rate to achieve convergence on this dataset (Footnote~\ref{foot:svhn}).}
    \label{fig:svhn}
\end{figure}

\newpage
\subsection{Pythia Language Models}
\label{app:pythia}
    
\begin{figure}[h!]
    \centering
    \includegraphics[width=\textwidth]{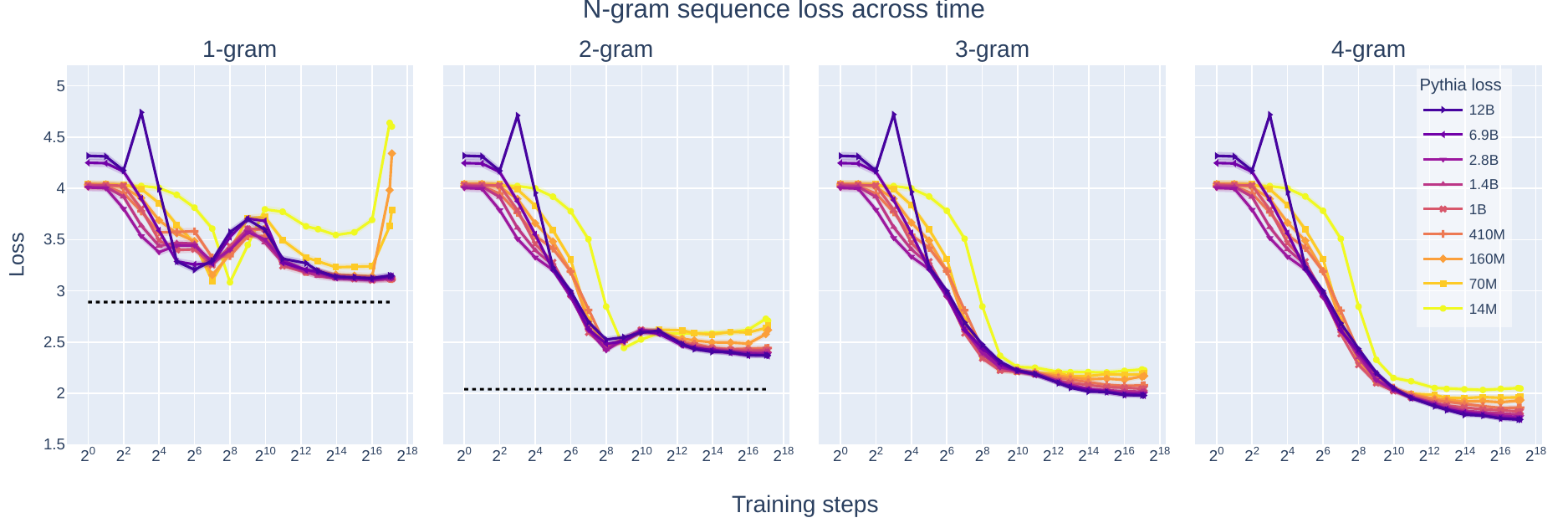}
    \includegraphics[width=\textwidth]{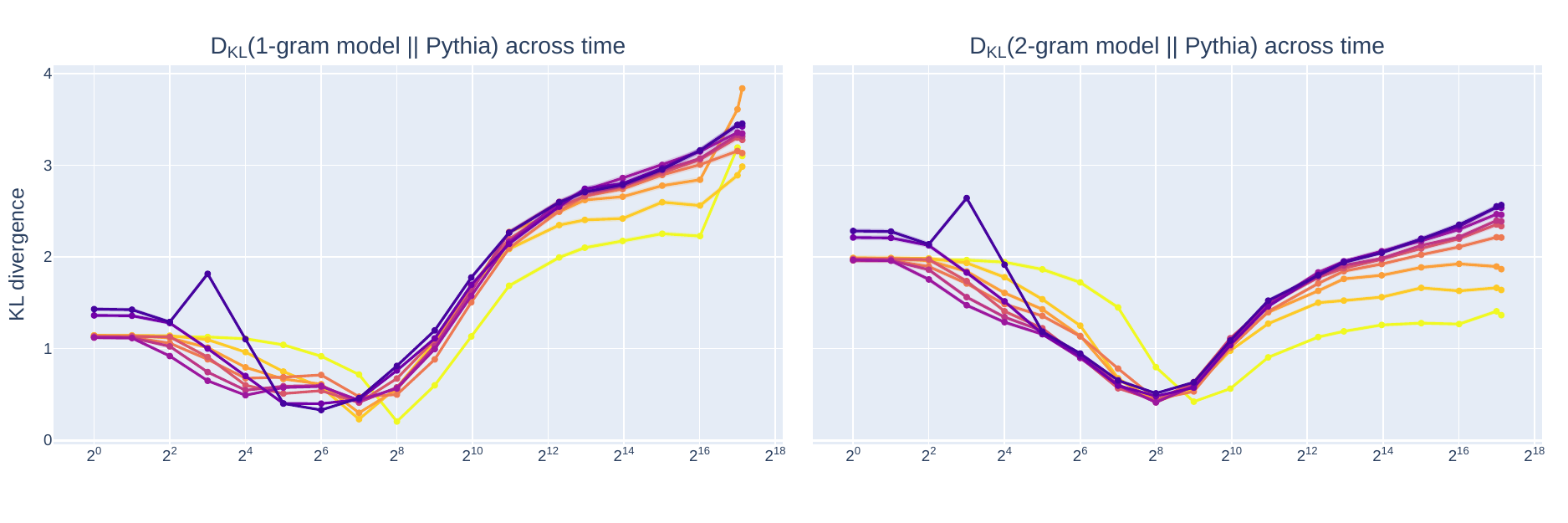}
    \caption{\textbf{(top)} Average cross-entropy loss of Pythia models evaluated on 1- through 4-gram sequences where the 3- and 4-gram models are evaluated on a subset of the Pile, \textbf{(bottom)} KL divergence between the predictions of our $n$-gram language models and the predictions of Pythia checkpoints ($N = 4096$.)}
    \label{fig:pythia-loss-across-time}
\end{figure}

\begin{figure}[h]
    \centering
    \includegraphics[width=\textwidth]{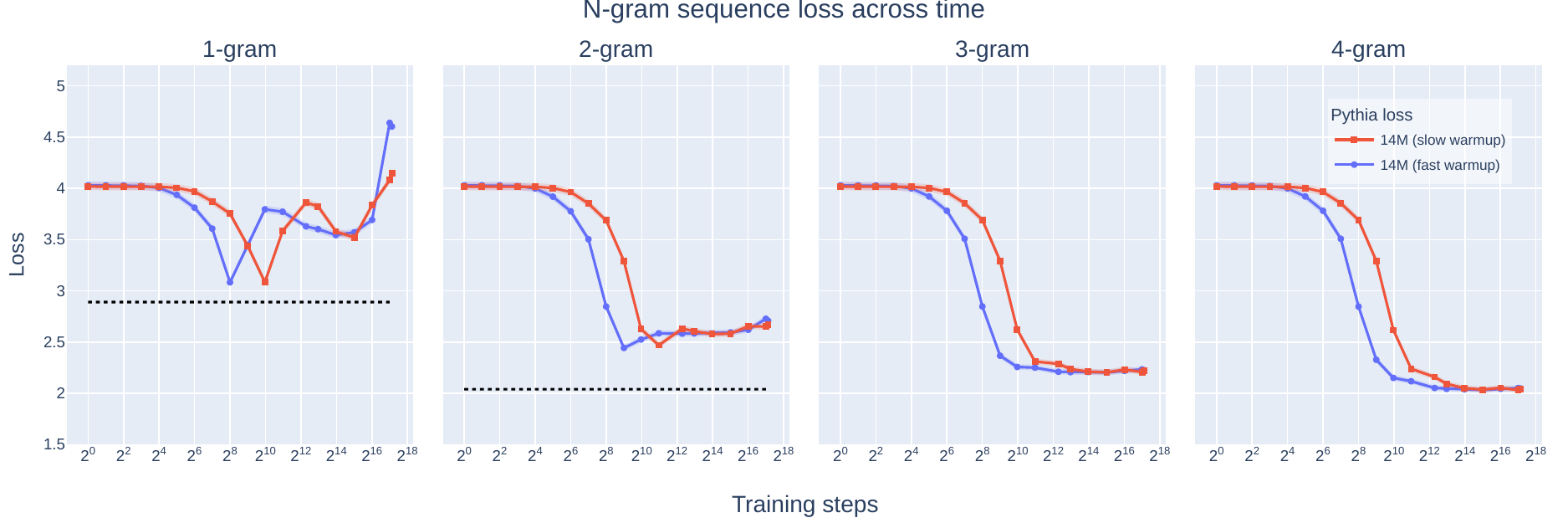}
    \vspace{0.5cm}
    \includegraphics[width=\textwidth]{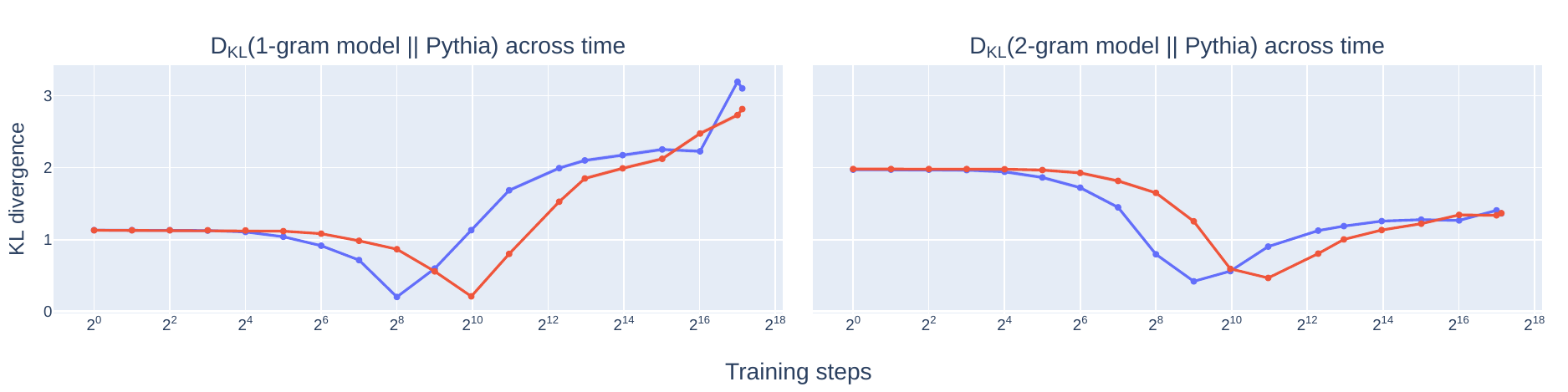}
    \includegraphics[width=\textwidth]{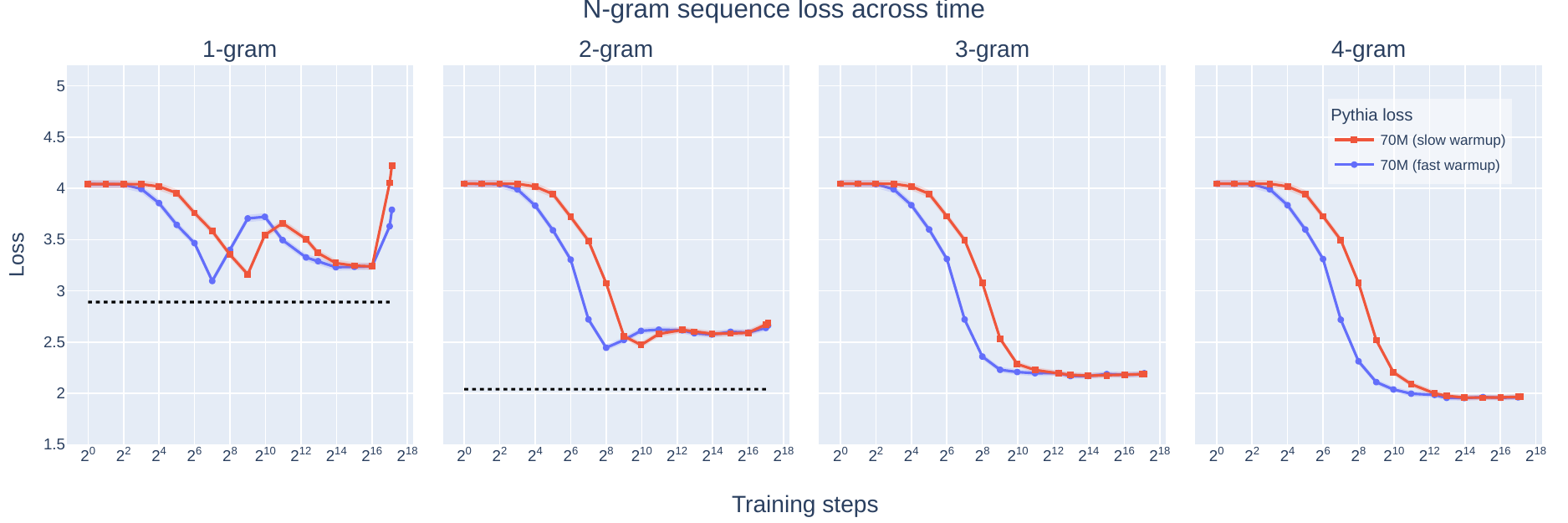}
    \includegraphics[width=\textwidth]{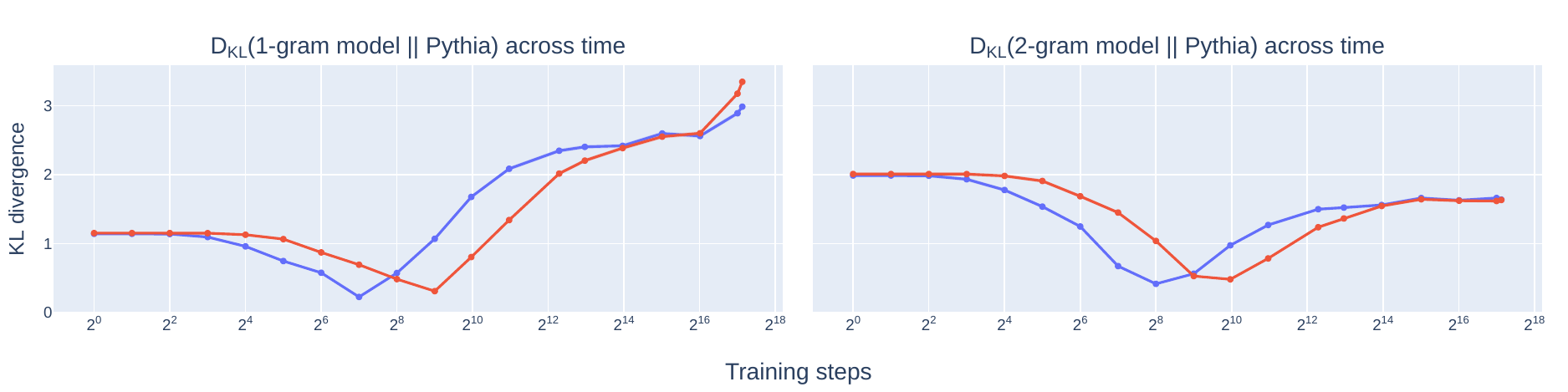}
    \caption{Effects of fast and slow learning rate warmup on n-gram sequence loss and KL divergence, Pythia 14M and 70M ($N = 4096$.)}
    \label{fig:pythia-small-warmup}
\end{figure}

\begin{figure}[h]
    \centering
    \includegraphics[width=\textwidth]{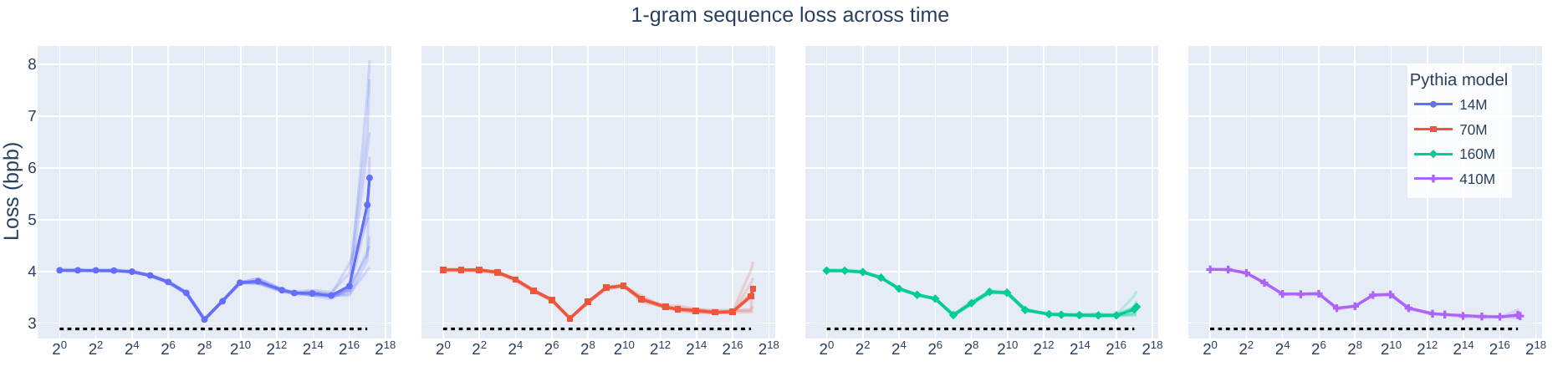}
    \includegraphics[width=\textwidth]{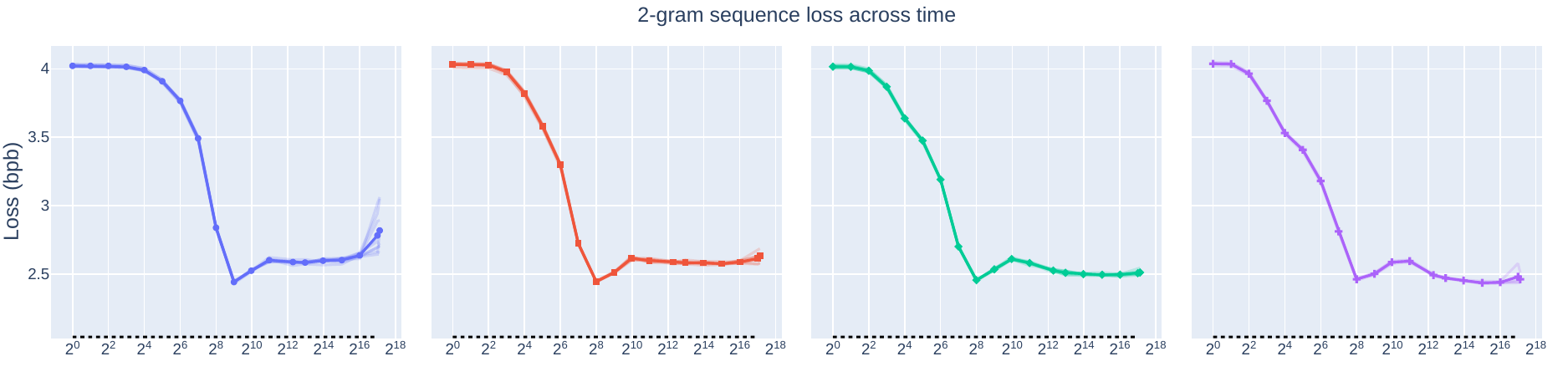}
    \includegraphics[width=\textwidth]{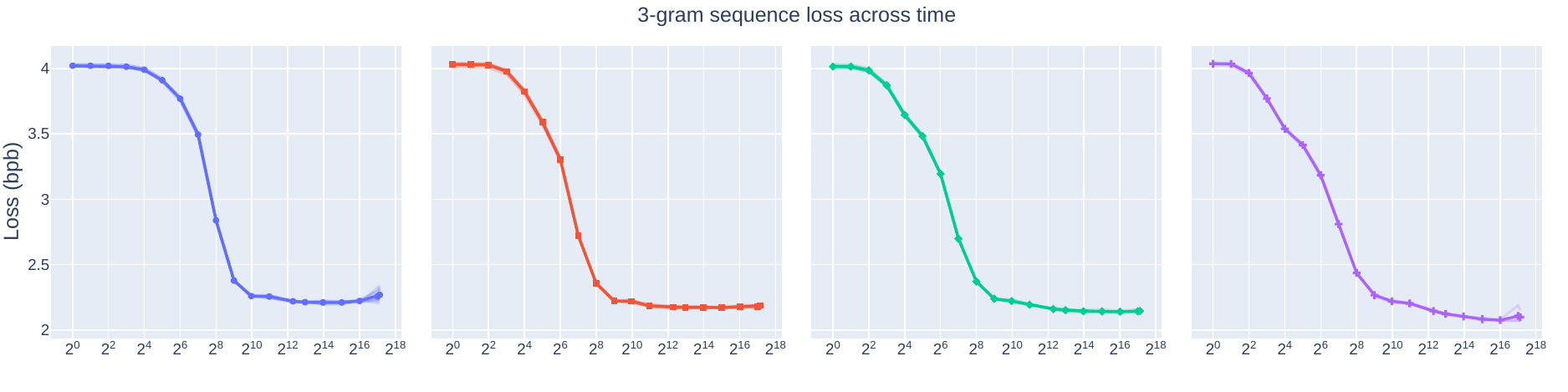}
    \includegraphics[width=\textwidth]{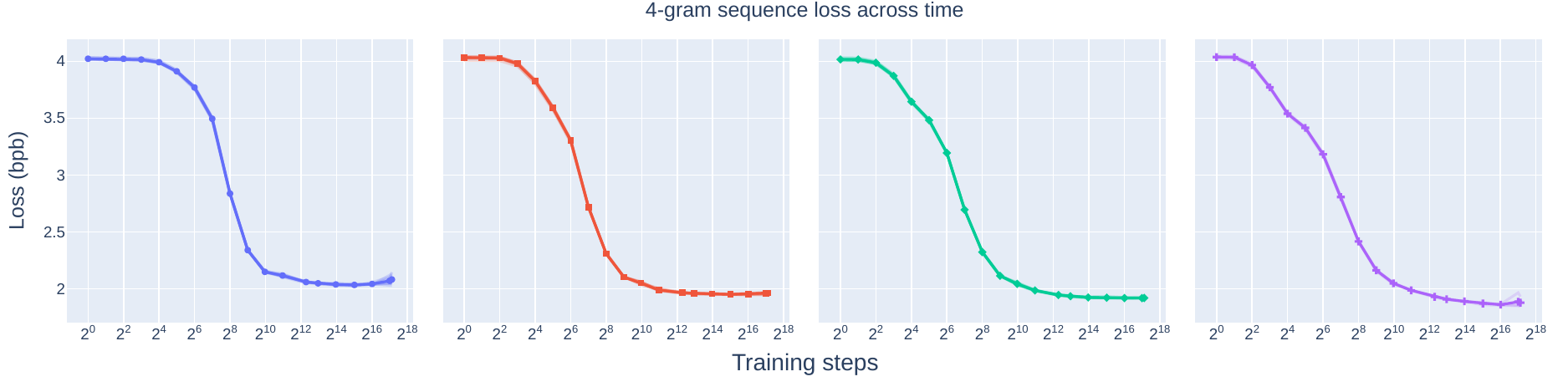}
    \caption{N-gram sequence loss over Pythia sizes and random seeds (4096 sequences sampled at each step, 9 seeds for Pythia 14M, 70M and 160M, 4 seeds for Pythia 410M.)}
    \label{fig:pythia-losses-across-time}
\end{figure}

\begin{figure}[h]
    \centering
    \includegraphics[width=\textwidth]{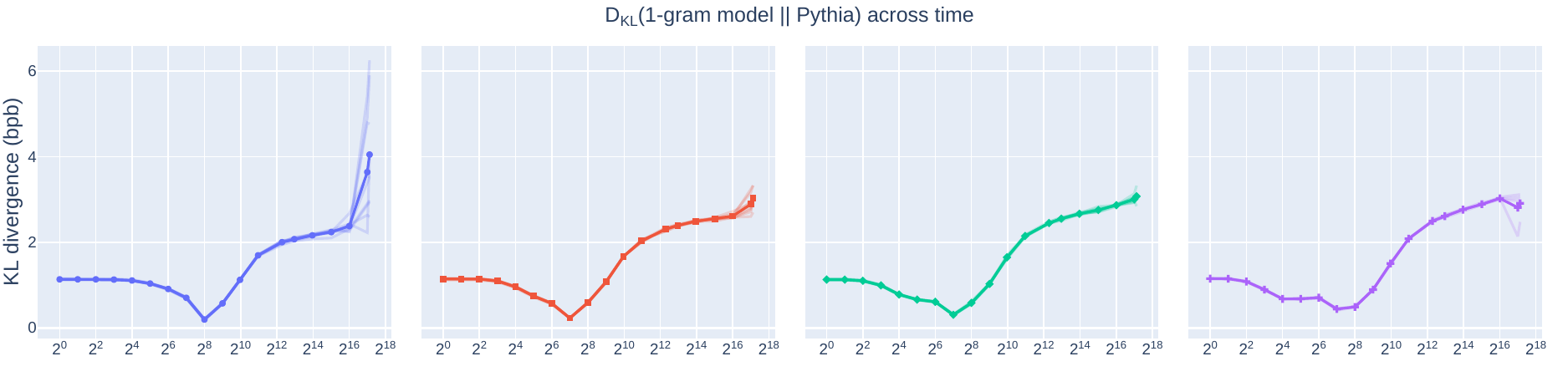}
    \includegraphics[width=\textwidth]{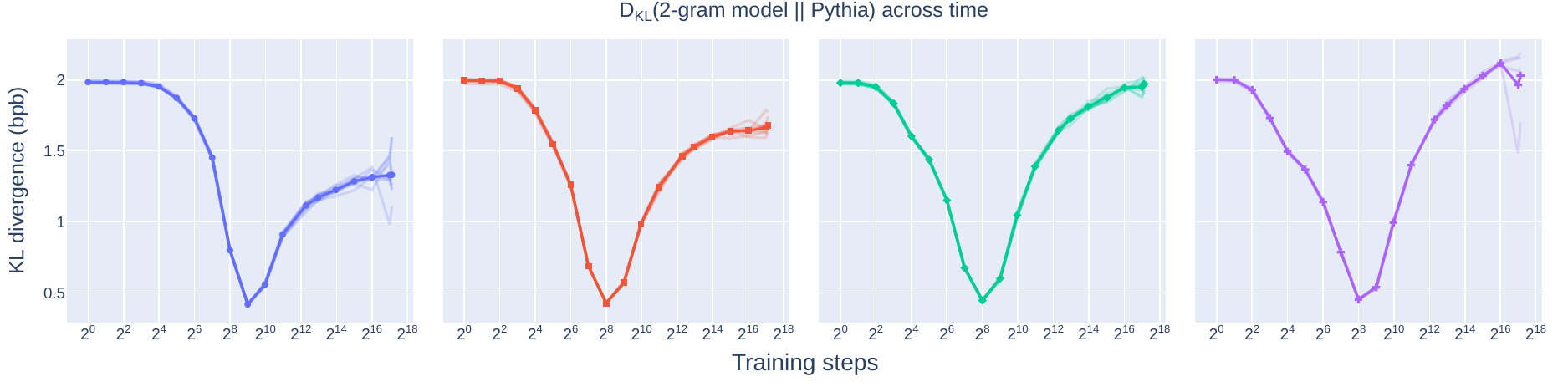}   
    \caption{N-gram KL divergence over Pythia sizes and random seeds (4096 sequences sampled at each step, 9 seeds for Pythia 14M, 70M and 160M, 4 seeds for Pythia 410M.)}
    \label{fig:pythia-kl-divs-across-time}
\end{figure}

\clearpage
\section{First order optimal transport under a boundary constraint}\label{app:mean-shift}

\begin{wrapfigure}[13]{r}{0.4\textwidth}
\vspace{-30pt}
\begin{minipage}{0.4\textwidth}
\begin{algorithm}[H]
    \caption{Optimal constrained mean shift}
    \begin{algorithmic}[1]
    \Require Input vector $\xx \in [0, 1]^n$
    \Ensure Desired mean $m \in [0, 1]$
    \State Sort the coordinates of $\xx$
    \State $\Bar{x} \gets \sum_{i = 1}^n x_i$
    \State $\yy \gets \mathbf{0}_n$
    \For{$i \in 1\ldots n$}
        \State $y_i \gets x_i + m - \Bar{x}$
        \If{$y_i > 1$}
            \State $\Bar{x} \gets \sum_{j = i}^n x_j$
            \State $y_i \gets 1$
            \State $m \gets \frac{nm - i}{n - i}$
        \EndIf
    \EndFor
    \State Put coordinates of $\yy$ in their original order\\
    \Return $\yy$
    \end{algorithmic}\label{alg:mean-shift}
\end{algorithm}
\end{minipage}
\end{wrapfigure}

We would like to surgically change the mean of a set of images while keeping their pixel intensities constrained to the range $[0, 1]$. The least-squares optimal algorithm for this task is described in Alg.~\ref{alg:mean-shift}, and we prove its correctness in the following theorem.

\begin{theorem}
    Let $\xx$ be a vector in $[0, 1]^n$ and let $m \in [0, 1]$ be a desired mean. Then the optimization problem
    \begin{align*}
        \min_{\yy \in [0, 1]^n} \|\xx - \yy\|_2^2 \quad \mathrm{s.t.}\: \frac{1}{n} \sum_{i = 1}^n y_i = m
    \end{align*}
    has a unique solution given by Algorithm 1.
\end{theorem}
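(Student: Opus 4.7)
The plan is to solve the problem via convex duality, read off the water-filling structure of the minimizer from the KKT conditions, and then verify by induction that Algorithm~\ref{alg:mean-shift} reproduces this structure coordinate by coordinate. The feasible set $F = \{\yy \in [0,1]^n : \tfrac{1}{n}\sum_i y_i = m\}$ is a nonempty (it contains $(m,\ldots,m)$), compact, convex polytope, and the objective $\yy \mapsto \|\xx-\yy\|_2^2$ is continuous and strictly convex, so a unique minimizer $\yy^{\star}$ exists. Writing the Lagrangian with a scalar multiplier $\lambda$ for the mean constraint and nonnegative multipliers $\alpha_i, \beta_i$ for $y_i \le 1$ and $-y_i \le 0$, stationarity gives $y_i^{\star} = x_i - \tfrac{\lambda}{2} - \tfrac{\alpha_i}{2} + \tfrac{\beta_i}{2}$; combined with complementary slackness this collapses to
\[
    y_i^{\star} \;=\; \operatorname{clip}_{[0,1]}(x_i + t^{\star}),
\]
where $t^{\star} \in \mathbb{R}$ is the unique scalar such that $\sum_i \operatorname{clip}_{[0,1]}(x_i + t^{\star}) = nm$ (uniqueness follows from monotonicity of the sum in $t$). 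Without loss of generality $m \ge \tfrac{1}{n}\sum_i x_i$, so $t^{\star} \ge 0$ and only the upper clip is active; the case $m < \tfrac{1}{n}\sum_i x_i$ follows from the involution $\xx \mapsto \mathbf{1}-\xx$, $m \mapsto 1-m$.

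Crucially, under the ascending sort, the set of upper-clipped coordinates forms a suffix: if $x_i + t^{\star} \ge 1$ then $x_j + t^{\star} \ge 1$ for every $j \ge i$. I would then prove by induction on the loop index $i$ in Algorithm~\ref{alg:mean-shift} the following invariant: at the start of iteration $i$, the quantity $\bar{x}$ equals $\sum_{j=i}^n x_j$ (i.e.\ the sum over the still-unassigned coordinates), and the current value of $m$ equals the mean that the unassigned coordinates must achieve so that the full vector $\yy$ has mean $m_{\text{orig}}$. Line~5 assigns $y_i$ the unique shift of $x_i$ that attains this conditional mean when every remaining coordinate receives the same shift; if this value lies in $[0,1]$ the invariant is preserved trivially, and otherwise lines~7--9 record the clip at $1$ and restart the subproblem on the $n-i$ remaining coordinates with the residual mean target $\tfrac{nm - i}{n-i}$. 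When the loop completes without further clippings, all remaining coordinates share a common shift, which is precisely $t^{\star}$, and the clipped coordinates all equal $1$, so the output coincides with $\yy^{\star}$.

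The main technical obstacle is justifying the reduction in the induction step: one must verify that the sorted order ensures it is never beneficial to leave an earlier (smaller) coordinate clipped at $1$ while keeping a later (larger) one unclipped, i.e.\ that the suffix structure of the water-filling solution is compatible with the order in which the algorithm commits to clips. This follows from an exchange argument: swapping mass between an earlier coordinate clipped at $1$ and an unclipped later coordinate would push the later coordinate above $1$ (by the sort) or strictly increase $\|\xx - \yy\|_2^2$ (by the KKT form), contradicting optimality. A secondary subtlety is to check that the updated mean target $\tfrac{nm-i}{n-i}$ always lies in $[0,1]$ whenever the loop continues, which is automatic from the fact that a clip has just occurred (forcing $nm - i \ge 0$ and $nm - i \le n - i$ under the running invariants).
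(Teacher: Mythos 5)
Your approach is essentially the paper's: both proofs run KKT on the Lagrangian to obtain the common-shift (``water-filling'') form of the minimizer on the unclipped coordinates, reduce the case $m < \bar{x}$ to $m > \bar{x}$ by a reflection of the hypercube, and then match Alg.~\ref{alg:mean-shift} by induction/recursion over the sorted coordinates, with an exchange argument controlling which coordinates clip. Your additions --- the explicit existence/uniqueness argument from strict convexity and compactness of the feasible polytope, and the clean characterization $y_i^{\star} = \operatorname{clip}_{[0,1]}(x_i + t^{\star})$ governed by a single scalar $t^{\star}$ --- are correct and make explicit what the paper leaves implicit.

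One concrete slip: you sort \emph{ascending} and assert that the clipped coordinates form a suffix, but the algorithm's forward loop commits to $y_i$ at iteration $i$ and never revisits it. Under ascending order the small coordinates are processed first with the shift $m - \bar{x}$ computed as if nothing clips; if a later (larger) coordinate then clips, the true common shift $t^{\star}$ for the unclipped coordinates is strictly larger, so those early assignments are too small and the algorithm has no mechanism to repair them. Your own loop invariant --- that $\bar{x}$ and $m$ track the still-unassigned coordinates at the start of iteration $i$ --- in fact presupposes that every clip occurs \emph{before} the first non-clip, i.e.\ that the clipped set is a prefix of the iteration order. That is exactly what a \emph{descending} sort guarantees, and it is the step the paper's proof makes explicit: if $y_1 < 1$ then $y_i < 1$ for all $i$, so the unconstrained solution is already optimal, and otherwise $y_1 = 1$ and one recurses. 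Flipping the sort direction (for the $\bar{x} < m$ branch) repairs the correspondence, and the rest of your argument then goes through.
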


\begin{proof}
    Let $\Bar{x} = \frac{1}{n} \sum_{i = 1}^n x_i$. If $\Bar{x} = m$, we immediately have the optimal solution $\yy^* = \xx$, because our constraints are already satisfied and we achieve zero loss by leaving $\xx$ unchanged.

    We can reduce the case where $m < \Bar{x}$ to the case where $\Bar{x} < m$ (or vice versa) by exploiting the reflection-symmetry of the problem. Specifically, if the solution to the analogous problem in $\xx'$ and $m'$, where $\xx' = \frac{1}{2} - \xx$ and $m' = \frac{1}{2} - m$, is $\yy'$, then the solution to the original problem is $\yy^* = \frac{1}{2} - \yy'$. This is due to the reflection-invariance of the Euclidean distance, the linearity of our mean constraint, and the fact that reflecting about $\frac{1}{2}$ cannot move points in $[0, 1]^n$ outside $[0, 1]^n$. Because of this symmetry, in what follows we will assume without loss of generality that $\Bar{x} < m$.

    Note also the optimal solution must have the property that $\forall i: x_i \leq y_i$. Assume for the sake of contradiction that $x_i > y_i$ for some $i$. Then we can improve upon this solution by increasing $y_i$ by some $\epsilon > 0$, and compensating for this by moving another entry $y_j$ for which $x_j < y_j$ closer to its original value by $\epsilon$.
    
    \paragraph{Setting up the Lagrangian.} Using the Karush-Kuhn-Tucker conditions, we encode the problem with the Lagrangian
    \begin{align}
        \mathcal{L}(\yy, \lambda, \boldsymbol{\mu}, \boldsymbol{\nu}) = \sum_{i = 1}^n (y_i - x_i)^2 + \underbrace{\lambda \Big ( \frac{1}{n} \sum_{i = 1}^n y_i - m \Big )}_{\text{mean constraint}} - \underbrace{\sum_{i = 1}^n \mu_i y_i + \sum_{i = 1}^n \nu_i (y_i - 1).}_{\text{inequality constraints}}
    \end{align}
    Differentiating $\mathcal{L}$ with respect to $y_i$ yields the stationarity condition
    \begin{align}\label{eq:first-order-opt}
        \frac{\partial \mathcal{L}}{\partial y_i} = 2(y_i - x_i) + \frac{\lambda}{n} - \mu_i + \nu_i = 0.
    \end{align}
    The KKT complementary slackness condition requires that $\mu_i y_i = 0$ and $\nu_i (y_i - 1) = 0$ for each $i$. This implies that $\mu_i$ must be zero if $y_i > 0$, and $\nu_i$ must be zero if $y_i < 1$. For each $i$ where $y_i < 1$, we can use Eq.~\ref{eq:first-order-opt} and complementary slackness to write $y_i$ as $x_i - \frac{\lambda}{2n}$.

    \paragraph{Putting it all together.}
    Assume $\xx$ and $\yy$ are written in a basis that ensures the coordinates of $\xx$ are sorted in descending order, so that $x_1 \geq x_2 \geq \ldots \geq x_n$. Our problem is invariant to permutation of indices, so this does not affect the solution.

    We can now solve for $y_1$, the final position of the largest coordinate, in the following way. Suppose that $y_1 < 1$. Then we have $\forall i : y_i < 1$, and the mean constraint can be written as $\frac{1}{n} (\sum_{i = 1}^n x_i) - \frac{\lambda}{2n} = m$. This allows us to solve for all $y_i$:
    \begin{align}\label{eq:trivial-case}
        y_i = x_i - \frac{\lambda}{2n} = x_i + m - \frac{1}{n} (\sum_{i = 1}^n x_i).
    \end{align}
    Note Eq.~\ref{eq:trivial-case} may ``overshoot'' and violate the inequality constraint $y_i \leq 1$. If it does, then we know our supposition is false and $y_1 = 1$. If it does not violate the constraint, then it must be optimal because it is also the solution to the relaxed version of this problem without the $[0, 1]$ constraint. In the latter case, we are done.

    Given that $y_1 = 1$, the subproblem of solving for $y_2, \ldots, y_n$ is a smaller instance of the original problem: the target mean for these coordinates is $m' = \frac{nm - 1}{n - 1}$. We can recursively apply this reasoning to solve for all other $y_i$. This procedure coincides with Algorithm 1.
\end{proof}

\clearpage
\section{Derivation of the Conrad distribution}

\begin{theorem}\label{thm:conrad}
    Among all distributions supported on $[0, 1]$ with desired mean $m \neq \frac{1}{2}$, the Conrad distribution with density $p(x) = \frac{b \exp(-b x + b)}{\exp(b) - 1}$ has maximum entropy, where the parameter $b \neq 0$ is chosen to satisfy the equation $m = -\frac{b - \exp(b) + 1}{b (\exp(b) - 1)}$. In the special case of $m = \frac{1}{2}$, the maximum entropy distribution is $\mathrm{Unif}(0, 1)$.
\end{theorem}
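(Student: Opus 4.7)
The plan is to derive the Conrad density via a standard Lagrange multiplier / calculus of variations argument, then confirm optimality by a short relative entropy calculation that avoids any second-order analysis.

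First I would set up the functional Lagrangian for maximizing differential entropy $H(p) = -\int_0^1 p(x) \log p(x)\, dx$ subject to normalization $\int_0^1 p(x)\, dx = 1$ and the mean constraint $\int_0^1 x\, p(x)\, dx = m$. Taking the pointwise functional derivative and setting it to zero gives $-\log p(x) - 1 - \lambda_0 - \lambda_1 x = 0$, so any interior stationary point has the exponential-family form $p(x) = A \exp(-bx)$. Plugging this into the normalization constraint forces $A = b/(1 - e^{-b}) = b e^{b}/(e^b - 1)$, recovering the stated density $p(x) = b \exp(-bx + b)/(\exp(b) - 1)$. A direct integration by parts then gives $\int_0^1 x\, p(x)\, dx = 1/b - 1/(e^b - 1) = (e^b - 1 - b)/(b(e^b - 1))$, matching the consistency equation in the statement.

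For optimality, rather than a second-variation argument, I would invoke the Gibbs inequality: for any competing density $q$ on $[0,1]$ with the same mean $m$,
\begin{equation*}
0 \leq D(q \,\|\, p) = -H(q) - \int_0^1 q(x) \log p(x)\, dx.
\end{equation*}
Since $\log p(x) = \log A - bx$ is affine in $x$, the cross term $\int_0^1 q \log p = \log A - bm$ is the \emph{same constant} for every admissible $q$. Therefore $H(q) \leq bm - \log A = H(p)$, with equality iff $q = p$ almost everywhere. This gives the maximum entropy conclusion in one line and bypasses any convexity-of-functional analysis.

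It remains to show that the parameter equation $m = 1/b - 1/(e^b - 1)$ has a unique solution $b \neq 0$ for each $m \in (0,1) \setminus \{1/2\}$, and to handle the $m = 1/2$ case. I would show the right-hand side is a smooth strictly decreasing function of $b$ on $\mathbb{R} \setminus \{0\}$ with limits $1$ as $b \to -\infty$ and $0$ as $b \to +\infty$, and a removable value of $1/2$ at $b = 0$ (via the Taylor expansions $e^b - 1 - b = b^2/2 + O(b^3)$ and $b(e^b - 1) = b^2 + O(b^3)$). Strict monotonicity is the main technical obstacle; the cleanest route is to observe that $\{p_b\}$ forms a one-parameter natural exponential family with canonical parameter $-b$, whose mean $m(b)$ is automatically strictly monotone in the canonical parameter because $m'(b)$ equals minus the variance of $p_b$, which is strictly positive for any nondegenerate member of the family. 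For $m = 1/2$, taking $b \to 0$ in $p(x) = b e^{b(1-x)}/(e^b - 1)$ yields the uniform density $\mathrm{Unif}(0,1)$, consistent with the $x \mapsto 1 - x$ symmetry the problem acquires when $m = 1/2$.
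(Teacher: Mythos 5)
Your proof is correct and follows essentially the same route as the paper's: both posit the exponential form $p(x) \propto e^{-bx}$, fix the normalizer from $\int_0^1 p = 1$, obtain the mean formula by integration by parts, and establish optimality via the Gibbs/cross-entropy inequality, exploiting the fact that $\log p$ is affine in $x$ so the cross term is the same for every density with mean $m$. Your genuine additions are the existence-and-uniqueness argument for $b$ via strict monotonicity of the exponential-family mean map (the paper merely invokes Newton's method without showing a root exists) and a sign-correct statement of the Gibbs step, where the paper's displayed chain writes $H(q,p) = \E_q[\log p(x)]$ with a dropped minus sign.
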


\begin{proof}
    Consider the density function $p(x) = \exp(-a - b x)$, where $a$ and $b$ are selected to satisfy normalization $\int_{[0, 1]} p(x) dx = 1$ and mean $\int_{[0, 1]} p(x) x dx = m$ constraints, and another arbitrary density $q(x)$ which satisfies the same constraints. We will show that the entropy of $q$ can be no greater than the entropy of $p$.
    \begin{align*}
        H(q) &\leq H(q, p) \tag{inequality of entropy and cross-entropy} \\
        &= \E_q [ \log p(x) ] \tag{definition of cross-entropy} \\
        &= -a - b m \tag{definition of $p(x)$ and linearity} \\
        &= H(p) \tag{QED}
    \end{align*}
    We can now analytically solve for $a$ in terms of $b$. Integrating $p(x)$ from $0$ to $1$ yields $-\frac{\exp(-a - b)}{b} + \frac{\exp(-a)}{b} = 1$. Solving for $a$ we get $a = -b + \log \big ( \frac{\exp(b) - 1}{b} \big )$, which when plugged back into the original formula gives us $p(x) = \frac{b \exp(-b x + b)}{\exp(b) - 1}$.
    
    Integration by parts yields the following formula for the mean: $\int_{[0, 1]} p(x) x dx = -\frac{b - \exp(b) + 1}{b (\exp(b) - 1)}$. We can use a root-finding algorithm such as Newton's method to solve this expression for $b$ given a desired mean $m$. Note, however, that there is a singularity in the mean formula where $b = 0$. Applying l'Hôpital's rule twice yields the limit:
    \begin{align}
        \lim_{b \rightarrow 0} -\frac{b - \exp(b) + 1}{b (\exp(b) - 1)} = \lim_{b \rightarrow 0} \frac{\exp(b) - 1}{b \exp(b) + \exp(b) - 1} = \lim_{b \rightarrow 0} \frac{\exp(b)}{b \exp(b) + 2 \exp(b)} = \frac{1}{2}.
    \end{align}
    The maximum entropy distribution supported on $[0, 1]$ with no mean constraint is known to be $\mathrm{Unif}(0, 1)$. Since it happens to have the mean $\frac{1}{2}$, we may conclude that the Conrad distribution approaches $\mathrm{Unif}(0, 1)$ as $b$ approaches $0$.
\end{proof}

\clearpage
\section{Discrete domain proofs}
\label{app:discrete-moments}

Please refer to Section~\ref{sec:discrete-domains} for context pertinent to this section.

\begin{definition}[$n$-gram statistic]\label{def:ngram-statistic}
    Let $\mathcal V^N$ be the set of token sequences of length $N$ drawn from a finite vocabulary $\mathcal V$. Given some distribution over $\mathcal V^N$, an \textbf{$\boldsymbol{n}$-gram statistic} is the probability that an $n$-tuple of tokens $(v_1, \ldots, v_n) \in \mathcal V^{n}$ will co-occur at a set of unique indices $(t_1, \ldots t_n) \in \mathbb N^{n}$.
\end{definition}

\ngrammoments*

\begin{proof}
    While it is natural to view one-hot sequences as Boolean matrices of shape $N \times |\mathcal V|$, where each row corresponds to a sequence position, we instead consider \emph{flattened} one-hot encodings in order to make use of the standard mathematical machinery for moments of random vectors.
    
    In this flattened representation, the component at index $i$ indicates whether the token at $t(i)$ is equal to the token $v(i)$, where $(t(i), v(i)) := \mathrm{divmod}(i, |\mathcal V|)$. For example, if $\mathcal V = \{ \text{``apple''}, \text{``pear''} \}$ and $N = 3$, the sequence ``apple apple pear'' will be encoded as the vector $(1, 0\:|\:1, 0\:|\:0, 1)$:
    \begin{align*}
        \underbrace{\text{apple }}_{(1, 0)} \underbrace{\text{apple }}_{(1, 0)} \underbrace{\text{pear }}_{(0, 1)} \overset{f}{\rightarrow} (1, 0, 1, 0, 0, 1).
    \end{align*}
    Now consider the moment corresponding to some arbitrary multi-index $\alpha \in \mathbb N^{N \cdot |\mathcal V|}$. For illustration, let $\alpha = (0, 2\:|\:0, 0\:|\:1, 0)$. Then the corresponding moment is
    \begin{equation}
        \E[f(x)^\alpha] =\E[\underbrace{\cancel{f(x)_1^0} f(x)_2^2}_{(0, 2} \underbrace{\cancel{f(x)_3^0} \cancel{f(x)_4^0}}_{0, 0} \underbrace{f(x)_5^1 \cancel{f(x)_6^0}}_{1, 0)}],
    \end{equation}
    where $f(x)_1^0$ denotes the first component of $f(x)$ raised to the power $0$. Since each component of $f(x)$ is a Boolean indicator for the presence or absence of a vocabulary item at a given position, we can rewrite it with \href{https://en.wikipedia.org/wiki/Iverson_bracket}{Iverson brackets}:
    \begin{align}
        &= \E \big [ [x(1) = \text{``pear''}]^2 \cdot [x(3) = \text{``apple''}] \big]\\
        &= \mathbb P[x(1) = \text{``pear''} \wedge x(3) = \text{``apple''}],
    \end{align}
    or the probability that the first and third tokens will be ``pear'' and ``apple'' respectively. Note that the exponent on the $[x(1) = \text{``pear''}]$ makes no difference here as long as it is nonzero. We can always binarize $\alpha$, replacing all nonzero values with 1, and the moment will be unchanged since any nonzero power of $\{0, 1\}$ is still $\{0, 1\}$.
    
    In general, since the coordinates are all Booleans in $\{0, 1\}$, multiplication corresponds to logical conjunction and expectation corresponds to probability:
    \begin{equation}\label{eq:conjunction}
        \E[f(x)^\alpha] = \E \Big [\prod_{i = 1}^{N} f(x)_i^{\alpha_i} \Big ] = \mathbb{P} \Big [ \bigwedge_{i \in A} x(t_i) = v_i \Big ],
    \end{equation}
    where $A$ is the set of indices in $1 \ldots |\mathcal V| \times N$ where $\alpha$ is nonzero. By Def.~\ref{def:ngram-statistic}, this probability is an $n$-gram statistic of order $k = |A|$.
    
    Conversely, we can convert any an $n$-gram statistic with tokens in $\mathcal V^{n}$ and sequence positions in $\mathbb N^n$ into a moment of $f_{\sharp}P$ by first flattening the indices, then plugging them into Eq.~\ref{eq:conjunction}. The sequence positions correspond to rows, and the tokens correspond to columns, of a one-hot matrix representation of a sequence. Here we need to multiply the row and column indices together to yield indices into the flattened vector.

    There will be infinitely many moments which correspond to any given $n$-gram, because multi-indices with components larger than one are redundant. 
\end{proof}

\embeddingmoments*

\begin{proof}
    By Thm.~\ref{thm:ngrams-are-moments}, we know that the one-hot analogues of $P$ and $Q$, i.e. $f_{\sharp}P$ and $f_{\sharp}Q$, have equal moments up to order $k$. That is, for every multi-index $\alpha \in \mathbb N^{N \cdot |\mathcal V|}$ where $|\alpha| \leq k$,
    \begin{equation}\label{eq:moment-equality}
        \E_{\mathbf{x} \sim f_{\sharp}P} [\mathbf{x}^\alpha] = \E_{\mathbf{x} \sim f_{\sharp}Q} [\mathbf{x}^\alpha].
    \end{equation}
    Now let $g : \{0, 1\}^{| \mathcal V | \times N} \rightarrow \mathbb R^{d \times N}$ be the function that multiplies each one-hot vector in a sequence by $\mathbf{E}$, returning a sequence of embedding vectors. Each side of Eq.~\ref{eq:moment-equality} is the expectation of a polynomial in the components of $\mathbf{x}$, and since $g$ is a linear map, $g(\mathbf{x})^\alpha$ is also a polynomial with the same degree.

    Now consider $(g \circ f)_{\sharp}P$ and $(g \circ f)_{\sharp}Q$, the analogues of $P$ and $Q$ in embedding space. Its moments take the form
    \begin{equation}
        \E_{\mathbf{x} \sim (g \circ f)_{\sharp}P}[\mathbf{x}^\alpha] = \E_{\mathbf{x} \sim f_{\sharp}P}[g(\mathbf{x})^\alpha].
    \end{equation}
    Because $g(\mathbf{x})^\alpha$ is a polynomial with degree $|\alpha| \leq k$, the expectation $\E_{\mathbf{x} \sim f_{\sharp}P}[g(\mathbf{x})^\alpha]$ must be a linear combination of moments of $f_{\sharp}P$ with order no greater than $k$. But by Eq.~\ref{eq:moment-equality}, all of these moments are equal between $P$ and $Q$, and hence all the moments of $(g \circ f)_{\sharp}P$ and $(g \circ f)_{\sharp}Q$ up to order $k$ must be equal.
\end{proof}

\section{Computational requirements}
\label{app:computational_requirements}

At the scales of the datasets we use in this study, both maximum entropy second order hypercube-constrained sampling and Gaussian optimal transport are extremely cheap to run. In the most expensive configuration (generating around 200K $64 \times 64$ CIFARNet images), the optimization loop takes roughly 65 seconds on a single NVIDIA L40 GPU, while requiring approximately 29 gigabytes of GPU memory. Based on hourly pricing of \$1.10 per hour from \hyperlink{https://cloud.vast.ai/?gpu_option=L40}{vast.ai}, this will cost around \$0.02 to generate a full set of synthetic CIFARNet images, with all other first and second order methods described in this paper requiring fewer computational resources than that.

However, the memory required for both hypercube-constrained sampling and Gaussian optimal transport rise with the square of the number of image features (meaning the fourth power of the image size). The compute requirements of the hypercube sampling also rise with the fourth power of the image size, while the requirements for Gaussian optimal transport rise with the sixth power. This means our methods can quickly become computationally infeasible with larger image sizes, which is why we limit ourselves to at most $64 \times 64$ images in this study.

Additionally, the maximum entropy \textit{third order} hypercube-constrained sampling is much more expensive than the second order methods, since the size of the statistic tensor grows as $O(d^{order})$. This means the coskewness tensor for CIFARNet images has dimensions $12288 \times 12288 \times 12288$. This would require nearly eight terabytes to store in full precision, which exceeds the memory capacity of our computing hardware by a significant degree. 

We therefore want to generate fake data that matches CIFARNet's coskewness statistics, without ever computing those statistics in full. Each step of our optimization process for generating third order fake thus only computes the coskewness statistics along matching length $l$ slices of coskewness tensors of the fake and real data, meaning we only need to store two $12288 \times 12288 \times l$ tensors at each step of optimization.

CIFARNet is the most expensive dataset to imitate with maximum entropy third order hypercube-constrained sampling, as matching its first, second, and third order statistics at the same time takes significant optimization effort. We currently use 10,000 optimization steps per class, taking a total of 36 hours on a single NVIDIA A40 GPU. Using an hourly price of \$0.403 from \hyperlink{https://cloud.vast.ai/?gpu_option=A40}{vast.ai}, this would cost roughly \$14.5.



\end{document}